\title[DC4L: Distribution Shift Recovery via Data-Driven Control for Deep Learning Models]{DC4L: Distribution Shift Recovery via Data-Driven Control for Deep Learning Models}
\newcommand{\real}{\mathbb{R}}
\newcommand{\ours}{\texttt{SuperStAR}}
\newcommand{\maxICacc}{14.21\%}
\newcommand{\maxICPacc}{9.81\%}
\newcommand{\maxCCacc}{8.25\%}
\newif\ifshowexternal
\newcommand{\ext}{\ifshowexternal ~of~\citet{lin2023take}\fi}
\newcommand{\extref}[2]{\ifshowexternal#2\else\ref{#1}\fi} 
\begin{document}

\maketitle

\begin{abstract}%
Deep neural networks have repeatedly been shown to be non-robust to the uncertainties of the real world, even to naturally occurring ones. A vast majority of current approaches have focused on data-augmentation methods to expand the range of perturbations that the classifier is exposed to while training. A relatively unexplored avenue that is equally promising involves sanitizing an image as a preprocessing step, depending on the nature of perturbation. In this paper, we propose to use control for learned models to recover from distribution shifts online. Specifically, our method applies a sequence of semantic-preserving transformations to bring the shifted data closer in distribution to the training set, as measured by the Wasserstein distance. Our approach is to 1) formulate the problem of distribution shift recovery as a Markov decision process, which we solve using reinforcement learning, 2) identify a minimum condition on the data for our method to be applied, which we check online using a binary classifier, and 3) employ dimensionality reduction through orthonormal projection to aid in our estimates of the Wasserstein distance. We provide theoretical evidence that orthonormal projection preserves characteristics of the data at the distributional level. We apply our distribution shift recovery approach to the ImageNet-C benchmark for distribution shifts, demonstrating an improvement in average accuracy of up to \maxICacc{} across a variety of state-of-the-art ImageNet classifiers. We further show that our method generalizes to composites of shifts from the ImageNet-C benchmark, achieving improvements in average accuracy of up to \maxICPacc{}.  Finally, we test our method on CIFAR-100-C and report improvements of up to \maxCCacc{}.
\end{abstract}

\begin{keywords}%
  distribution shift, Markov decision process, reinforcement learning
\end{keywords}

\section{Introduction}

Deep learning models are excellent at learning patterns in large high dimensional datasets. However, the brittleness of deep neural networks (DNNs) to distribution shifts is a challenging problem. \cite{hendrycks_baseline_2018} showed that, even in naturally occurring distribution shift scenarios, a classifier's performance can deteriorate substantially. Arguably, one of the most widespread uses of deep learning techniques currently is in image recognition. In this paper, we propose to use decision and control for learned models (DC4L) to improve robustness to distribution shifts, with demonstration on image classification tasks. The idea is to actively \emph{sanitize} a set of images depending on the type of the distribution shift. Intuitively, the training distribution of a classifier is viewed as its \emph{comfort zone}, where the behavior is more predictable and trustworthy. At run time, when exposed to perturbations that push the images outside of this comfort zone, a feedback policy takes control actions which can bring the images back to a more familiar space. The control actions are so chosen that the semantic meaning of the images are preserved. This ensures correctness. While approaches for DNN robustness have until now focused on data-augmentation techniques \citep{hendrycks2019augmix, noisy_mix_paper, manifold_mix, cut_mix, puzzle_mix, deep_augment}, we show that by using ideas from the data-driven control paradigm, it is possible to provide an additional level of performance boost beyond what can be offered by SOTA augmentation methods.

Our technique exploits the following observation: when distribution shift arises in the external environment due to natural causes, it persists for a certain duration of time. For instance, when a corruption in image quality occurs due to snow, this corruption does not disappear in the next image frame. This gives the system some time to \emph{adapt} and \emph{recover} from this shift by computing some semantic preserving transformations to the data. Our technique, \underline{Super}visory system for \underline{S}hif\underline{t} \underline{A}daptation and \underline{R}ecovery (\ours), applies a sequence of semantic-preserving transforms to the input data, correcting the input to align with the original training set of the classifier. We show that formulating the sequence selection problem as a Markov decision process (MDP) lends a natural solution: reinforcement learning (RL).

In summary, our contributions towards addressing the problem of robustness to semantic preserving shifts are as follows. 1) We translate the problem of distribution shift recovery for neural networks to a Markov decision process, which we solve using reinforcement learning. 2) We identify a minimum condition of operability for our method, which we check online using a binary classifier. 3) We develop a method to efficiently compute the degree of distribution shift by projecting to a lower dimensional space. This uses results from \cite{lim2} in conjunction with the Wasserstein distance. 4) We demonstrate an application to ImageNet-C and achieve significant accuracy improvements (up to \maxICacc{} averaged across all shift severity levels) on top of standard training and data-augmentation schemes. We further show that our method generalizes beyond the ImageNet-C benchmark, yielding up to \maxICPacc{} on composite Imagenet-C shifts and up to \maxCCacc{} on CIFAR-100-C in accuracy improvements.
\section{Preliminaries} \label{sec:prelims}




We begin by assuming that the images are sampled from a measurable space $(\mathcal{X}, \mathcal{A}_{\mathcal{X}})$. 
Let $\Delta(\mathcal{X}, \mathcal{A}_\mathcal{X})$ denote the set of all probability measures on $(\mathcal{X}, \mathcal{A}_{\mathcal{X}})$. We pick a distribution $D \in \Delta(\mathcal{X}, \mathcal{A}_\mathcal{X})$ from which the current set of images are sampled. Assume that the labels belong to a measurable space $(\mathcal{Y}, \mathcal{A}_\mathcal{Y})$, and a classifier $C$ is an $\mathcal{A}_\mathcal{X} \backslash \mathcal{A}_\mathcal{Y}$ measurable map, $\mathcal{C} : \mathcal{X} \rightarrow \mathcal{Y}$. An \emph{oracle classifier} $\mathcal{C}^*$ produces the ground truth labels. Next, we define a semantic preserving transform $\mathbb{T}$.

\begin{definition}[Semantic Preserving Transform]
\label{defn:semantic_preserving}
A function $\mathbb{T} : \mathcal{X} \rightarrow \mathcal{X}$ is semantic preserving iff $\mathcal{C}^*(x) = \mathcal{C}^*(\mathbb{T}(x))$, for all $x \in \mathcal{X}$.
\end{definition}

We denote by $\mathbb{S}$ the set of all such semantic preserving transforms. In the standard empirical risk minimization (ERM) paradigm, we approximate $\mathcal{C}^*$ with some classifier $\mathcal{C}$. When measuring robustness to common corruptions, typically a corrupting transform $\mathbb{T}_c$ belongs to $\mathbb{S} $. This includes transforms like the addition of Gaussian noise, speckle noise, and alike. 
The error of a classifier $\mathcal{C}$ under the distribution $D$, is defined as
$$ err(D) := {\mathbb{E}}_{x \sim D} \left[\mathbbm{1}( \mathcal{C}^* (x) \neq \mathcal{C}(x))\right],$$
where $\mathbbm{1}$ is the standard indicator function, which evaluates to $1$ iff $\mathcal{C}^* (x) \neq \mathcal{C}(x)$.\footnote{For notational convenience, we hereafter denote both a random variable and its realization by a lowercase Latin letter.} For a robust classifier we expect $err(D)$ to be minimal for multiple choices of the distribution $D$. For instance in the case of common corruptions introduced in \citet{hendrycks2019benchmarking}, the goal is to optimize the choice of classifier $\mathcal{C}$ such that $err(D)$ is minimized, even under shift. This is typically achieved using data-augmentation schemes such as Augmix, NoisyMix, and DeepAugment. 
\vspace{-2mm}
\section{Functionality and Problem Statement} \label{sec:problem_approach}


\begin{wrapfigure}{r}{0.46\textwidth}
\vspace{-20mm}
\begin{center}
\includegraphics[width=7cm, height=3cm]{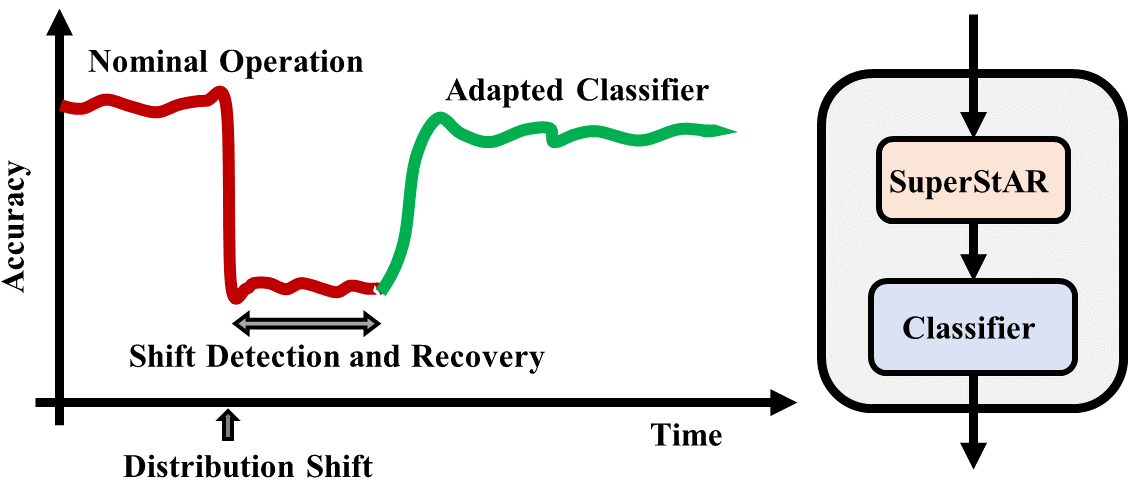}
\end{center}
\vspace{-6mm}
\caption{
    \footnotesize
    Overview of \ours{}. 
    At deployment, assume that a distribution shift causes a drop in accuracy. This is detected through changes in the Wasserstein distance between a validation set and the corrupted set. \ours{} computes a composition of transforms $\mathcal{I}_k$ to adapt to the shift and recover accuracy. This composition of \ours{} with the classifier helps it detect and adapt, boosting robustness of classification.}
\label{fig:operating_diagram}
\vspace{-2mm}
\end{wrapfigure}

At a high level, the functioning of \ours{} is akin to a supervisor for the classifier shown in Figure \ref{fig:operating_diagram}. \ours{} detects distribution shifts and computes a recovery strategy to be applied before sending an image to a classifier. Determining the appropriate recovery strategy presents an interesting challenge.

Consider a random variable $x_c = \mathbb{T} (z) $, where $z \sim D$ and $\mathbb{T}$ is a semantic preserving transform. Note that $\mathbb{T}$ is generally not invertible. However, one may possibly choose an element $\mathbb{T}^\prime$ from the set of semantic preserving transforms $\mathbb{S}$ that partially recovers the accuracy drop due to $\mathbb{T}$. It might even be effective to select a sequence of such transforms: an ordered set $\mathcal{T}:= \{ \mathbb{T}_k, \mathbb{T}_{k-1}, \dots \mathbb{T}_1 \}$ with $k \geq 1$, such that $ \mathbb{S} \ni \mathcal{I}_k(x) := \mathbb{T}_k \circ \mathbb{T}_{k-1} \circ \cdots \circ \mathbb{T}_1(x)$.\footnote{The order $\prec_\mathcal{T}$ on $\mathcal{T}$ is given by $\mathbb{T}_i \prec_\mathcal{T} \mathbb{T}_j \iff i < j$, $i,j \in\mathbb{N}$. $\mathbb{T}_{0}$ is assumed to be the identity function.} Ideally, we wish to find an algorithm which optimizes the following:

\vspace{-4mm}
\begin{equation} \textstyle
    \label{eq:optimization_problem}
    \begin{aligned}
    \mathcal{I}_k^* = \mathrm{arg min}_{\mathcal{I}_k} \text{ } R(\mathcal{I}_k) \; \text{, where} \; R(\mathcal{I}_k) = err(D_{ \mathcal{I}_k \circ \mathbb{T}}) - err(D), 
    \end{aligned}
\end{equation}
with $D_{ \mathcal{I}_k \circ \mathbb{T}}$ denoting the distribution of $\mathcal{I}_k \circ \mathbb{T}(z)$, $z\sim D$. The transform $\mathcal{I}_k$ is what effectively \emph{reverses} an image corruption due to $\mathbb{T}$. 


At deployment, when \ours{} detects a shift in distribution compared to a clean validation set, it should propose a composition of transforms $\mathcal{I}_k$ to minimize the cost outlined in Equation~\ref{eq:optimization_problem}. From the vantage point of classifier $\mathcal{C}$, images transformed using $\mathcal{I}_k$ \emph{appear} to be closer to the \emph{home} distribution $D$. An example result of this is shown in Figure \ref{fig:example_sequence}.


\begin{figure*}
    \centering
    \includegraphics[width=0.7\textwidth]{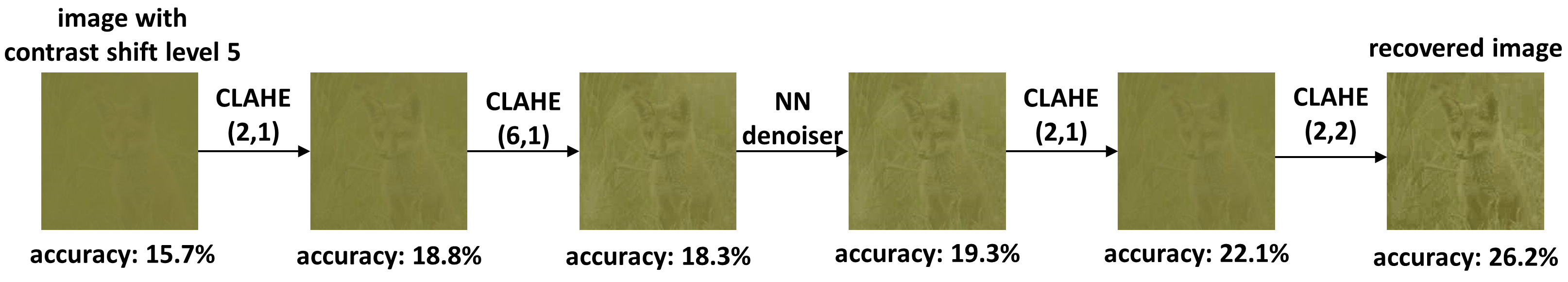}
    \vspace{-4mm}
    \caption{\footnotesize Example transformations applied to an image with contrast shift level 5. CLAHE($x$,$y$) denotes histogram equalization with strength determined by $x$ and $y$ (details in Appendix~\extref{sec:experiments}{E}\ext). The policy applies a non-trivial composition of transformations that would be difficult to find through manual manipulation. The policy chooses few redundant actions and improves the accuracy of an AugMix-trained ResNet-50 on a random batch of 1000 images. \vspace{-6mm} }
    \label{fig:example_sequence}
\end{figure*}
\section{Distribution Shift Recovery is a Markov Decision Process}
\label{sec:training_rl}
Our task is to compute a composition of transforms $\mathcal{I}_k$ to apply to the corrupted set $\mathbf{V}_c$, sampled i.i.d from $D_\mathbb{T}$, to realize the optimization cost outlined in Equation \ref{eq:optimization_problem}. To this end, we note the following theorem.
\begin{theorem}\label{thm}
$R(\mathcal{I}_k) \leq \alpha \cdot  d_{TV}({D}, {D}_{\mathcal{I}_k \circ \mathbb{T}})$, for some finite $\alpha \in \real$ and semantic preserving transform $\mathcal{I}_k$.
\end{theorem}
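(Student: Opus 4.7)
The plan is to recognize $R(\mathcal{I}_k)$ as the difference of expectations of a single bounded measurable function under two distributions, and then apply the standard variational characterization of total variation distance. The theorem should fall out with $\alpha = 1$ (or $\alpha = 2$, depending on the normalization of $d_{TV}$), essentially as a one-liner.

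First I would set $f(x) := \mathbbm{1}(\mathcal{C}^*(x) \neq \mathcal{C}(x))$. Since $\mathcal{C}$ and $\mathcal{C}^*$ are both $\mathcal{A}_\mathcal{X}$--$\mathcal{A}_\mathcal{Y}$ measurable, the disagreement set $\{x : \mathcal{C}^*(x) \neq \mathcal{C}(x)\}$ lies in $\mathcal{A}_\mathcal{X}$, so $f : \mathcal{X} \to \{0,1\}$ is a bounded measurable function. Unrolling the definitions of $err$ and of $R$, the quantity of interest can be rewritten as $R(\mathcal{I}_k) = \mathbb{E}_{x \sim D_{\mathcal{I}_k \circ \mathbb{T}}}[f(x)] - \mathbb{E}_{x \sim D}[f(x)]$, i.e., a difference of expectations of the same bounded $f$ under two different probability measures on $(\mathcal{X}, \mathcal{A}_\mathcal{X})$.

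Next I would invoke the elementary fact that, for any bounded measurable $g$ with oscillation $\sup g - \inf g \leq M$ and any two probability measures $P, Q$ on $(\mathcal{X}, \mathcal{A}_\mathcal{X})$, one has $|\mathbb{E}_P[g] - \mathbb{E}_Q[g]| \leq M \cdot d_{TV}(P, Q)$ under the convention $d_{TV}(P,Q) := \sup_{A \in \mathcal{A}_\mathcal{X}} |P(A) - Q(A)|$. This is a short consequence of the Jordan decomposition $P - Q = \mu_+ - \mu_-$, observing that $\mu_+(\mathcal{X}) = \mu_-(\mathcal{X}) = d_{TV}(P,Q)$ and that $|\int g \, d(P-Q)| \leq (\sup g)\mu_+(\mathcal{X}) - (\inf g)\mu_-(\mathcal{X}) = M \cdot d_{TV}(P,Q)$. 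Applied to our $f$ (whose oscillation is at most $1$), this yields $R(\mathcal{I}_k) \leq |R(\mathcal{I}_k)| \leq d_{TV}(D, D_{\mathcal{I}_k \circ \mathbb{T}})$, so the theorem holds with $\alpha = 1$.

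I do not expect a real obstacle: the statement is essentially the variational characterization of total variation restricted to indicator functions. The only minor points worth flagging in a formal write-up are the measurability of the disagreement event (already covered by the measurability assumptions on $\mathcal{C}$ and $\mathcal{C}^*$) and the chosen normalization of $d_{TV}$, which at worst changes the constant to $\alpha = 2$. It is worth noting that the semantic-preserving hypothesis on $\mathcal{I}_k$ is not actually used in the bound itself; it is built into the surrounding setup and determines the interpretation of $R$, but the inequality would go through identically for any measurable $\mathcal{I}_k$.
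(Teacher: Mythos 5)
Your proof is correct and rests on the same key observation as the paper's: $R(\mathcal{I}_k)$ is the difference of expectations of the single bounded indicator $f(x)=\mathbbm{1}(\mathcal{C}^*(x)\neq\mathcal{C}(x))$ under the two distributions, which is then bounded by (a constant times) the total variation distance, with the constant being the supremum of the indicator, i.e.\ $\alpha\le 1$. The technical route differs slightly: the paper assumes $\mathcal{X}$ is continuous, works with the pdfs $d$ and $d'$, pulls $\sup_x f(x)$ out of the integral, and then passes from $\int_{\mathcal{X}}[d(x)-d'(x)]\,\mathrm{d}x$ to $\sup_{A\in\mathcal{A}_{\mathcal{X}}}\left|\int_A[d(x)-d'(x)]\,\mathrm{d}x\right|$ to obtain $d_{TV}$, whereas you invoke the variational characterization of $d_{TV}$ via the Jordan decomposition of the signed measure $P-Q$. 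Your version is arguably tighter in execution: it needs no density or continuity assumption, and it avoids the paper's somewhat awkward intermediate expression $\left|\sup_x f(x)\cdot\int_{\mathcal{X}}[d(x)-d'(x)]\,\mathrm{d}x\right|$, which literally equals zero before the final step replaces the full-space integral with the supremum over measurable sets. Your closing remarks are also on point: the semantic-preserving hypothesis plays no role in the inequality itself, and the only bookkeeping issues are measurability of the disagreement set and the normalization convention for $d_{TV}$.
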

\begin{proof}
Let us denote by $d$ the pdf of $D_{ \mathcal{I}_k \circ \mathbb{T}}$ and by $d^\prime$ the pdf of $D$. We begin by expanding the definition of $R(\mathcal{I}_k)$.
\begin{align*}
    R(\mathcal{I}_k) &= err(D_{ \mathcal{I}_k \circ \mathbb{T}}) - err(D) \\
    &= {\mathbb{E}}_{x_1 \sim D_{ \mathcal{I}_k \circ \mathbb{T}}, x_2 \sim D } \left[\mathbbm{1}( \mathcal{C}^* (x_1) \neq \mathcal{C}(x_1)) - \mathbbm{1}( \mathcal{C}^* (x_2) \neq \mathcal{C}(x_2))\right] \\
    &= \textstyle\int_\mathcal{X} \mathbbm{1}( \mathcal{C}^* (x) \neq \mathcal{C}(x))[d(x)-d^\prime(x)] \text{d}x \\
    &\leq \left|\textstyle\int_\mathcal{X} \left(\textstyle\sup_{x\in\mathcal{X}} \mathbbm{1}( \mathcal{C}^* (x) \neq \mathcal{C}(x)) \right) [d(x)-d^\prime(x) ] \text{d}x \right| \\
    &= \left| \left(\textstyle\sup_{x\in\mathcal{X}} \mathbbm{1}( \mathcal{C}^* (x) \neq \mathcal{C}(x)) \right) \textstyle\int_\mathcal{X}  [d(x)-d^\prime(x) ] \text{d}x \right| \\
    &\leq \left| \textstyle\sup_{x\in\mathcal{X}} \mathbbm{1}( \mathcal{C}^* (x) \neq \mathcal{C}(x))  \right| \cdot \textstyle\sup_{A\in\mathcal{A}_\mathcal{X}} \left| \textstyle\int_A  [d(x)-d^\prime(x) ] \text{d}x \right| \\
    &= \alpha \cdot  d_{TV} (D,D_{ \mathcal{I}_k \circ \mathbb{T}}),
\end{align*}
where $\alpha=\left| \sup_{x\in\mathcal{X}} \mathbbm{1}( \mathcal{C}^* (x) \neq \mathcal{C}(x))  \right|$. The first step assumes $\mathcal{X}$ is a continuous space. The rest follows from expressing the definition of computing expectation in terms of the classifier error.
\end{proof}

Thus, for a transform $\mathbb{T}$ (possibly a corruption), it is possible for the classifier to recover performance if the apparent distribution under $\mathcal{I}_k \circ \mathbb{T}$ is close enough to the original distribution $D$. Then the optimal $\mathcal{I}_k$ is the sequence of semantic preserving transforms that minimize the distance from $D$. Equivalently, this can be viewed as a sequence of actions maximizing a reward. More formally, the task of computing $\mathcal{I}_k$ can be formulated as a reactive policy for an MDP, which can be learned using standard reinforcement learning techniques. In this section, we present this MDP formulation in the context of an image classification task, but it may be applied in any learning setting.


\begin{definition}[\textbf{MDP}]
\label{def:mdp}
A \emph{Markov Decision Process (MDP)} is a $6$-tuple $\mathcal{E}=(\mathcal{S}, \mathbb{A}, \mathcal{P}, \mathcal{R}, \gamma, \mathrm{I}_0 )$, where  $\mathcal{S} \subseteq \real^n$ is the set of states, $\mathbb{A} \subseteq \real^m$ is the set of actions, $\mathcal{P}(s'|s,a)$ specifies the probability of transitioning from state $s$ to $s'$ on action $a$, $\mathcal{R}(s,a)$ is the reward returned when taking action $a$ from state $s$, $\gamma \in [0,1)$ is the discount factor, and $\mathrm{I}_0$ is the initial state distribution.
\end{definition}


\textbf{State Representation.}
The environment has access to a set $\mathbf{V}_c \subset \mathcal{X}$, which is a set of possibly corrupted images. 
A state of the MDP is a compressed representation of this set $\mathbf{V}_c$, capturing the type of corruptions in an image. 
Let us assume that this projection is captured by some function $\mathbb{F}_R : \mathcal{A}_{\mathcal{X}} \rightarrow \mathfrak{R}$,  where $\mathfrak{R}$ is the space of representations for a set of images. 
We want a representation $r = \mathbb{F}_R(\mathbf{V}_c)$ to be rich enough that a policy can decipher the appropriate choice of action in $\mathbb{A}$, but also compact enough that it is possible to learn a policy within a few episodes. Typically, a smaller state space size leads to faster convergence for reinforcement learning algorithms.


In this paper, for a set of images $\mathbf{V}_c$, we select a 3-dimensional state representation that measures the average brightness, standard deviation, and entropy of the images in $\mathbf{V}_c$. 
We convert each image $x$ to grayscale, then obtain the discrete wavelet transform and compute its average brightness $B(x)$, standard deviation $S(x)$, and entropy $E(x)$.
The state representation is an average of all these values across the images,
\begin{equation} \textstyle
    \mathbb{F}_R(\mathbf{V}_c) =\frac{1}{|\mathbf{V}_c|}\sum_{x\in\mathbf{V}_c} \begin{bmatrix}
     B(x), & S(x), & E(x)
    \end{bmatrix}^\top.
\end{equation}

\textbf{Actions and Transitions.} 
The set of actions $\mathbb{A} \subseteq \mathbb{S}$ is a set of semantic preserving transforms from which the learner chooses to maximize some reward. For an example, see the action set selected for the ImageNet-C benchmark in Appendix~\extref{sec:experiments}{E}\ext.
Hence, capturing Equation \ref{eq:optimization_problem} as a reward leads the agent to pick actions that mitigate the current corruption to some extent. Transitions model the effect of applying a transform from $\mathbb{S}$ to a possibly corrupted set $\mathbf{V}_c$. With slight abuse of notation, we use $\mathbb{T}(\mathbf{V}_c)$ to denote set $\{ v' : v' = \mathbb{T}(v), v \in \mathbf{V}_c \}$.

\begin{wrapfigure}{r}{0.48\textwidth}
    \vspace{-10mm}
    \begin{center}
    \includegraphics[width=7.4cm, height=4.7cm]{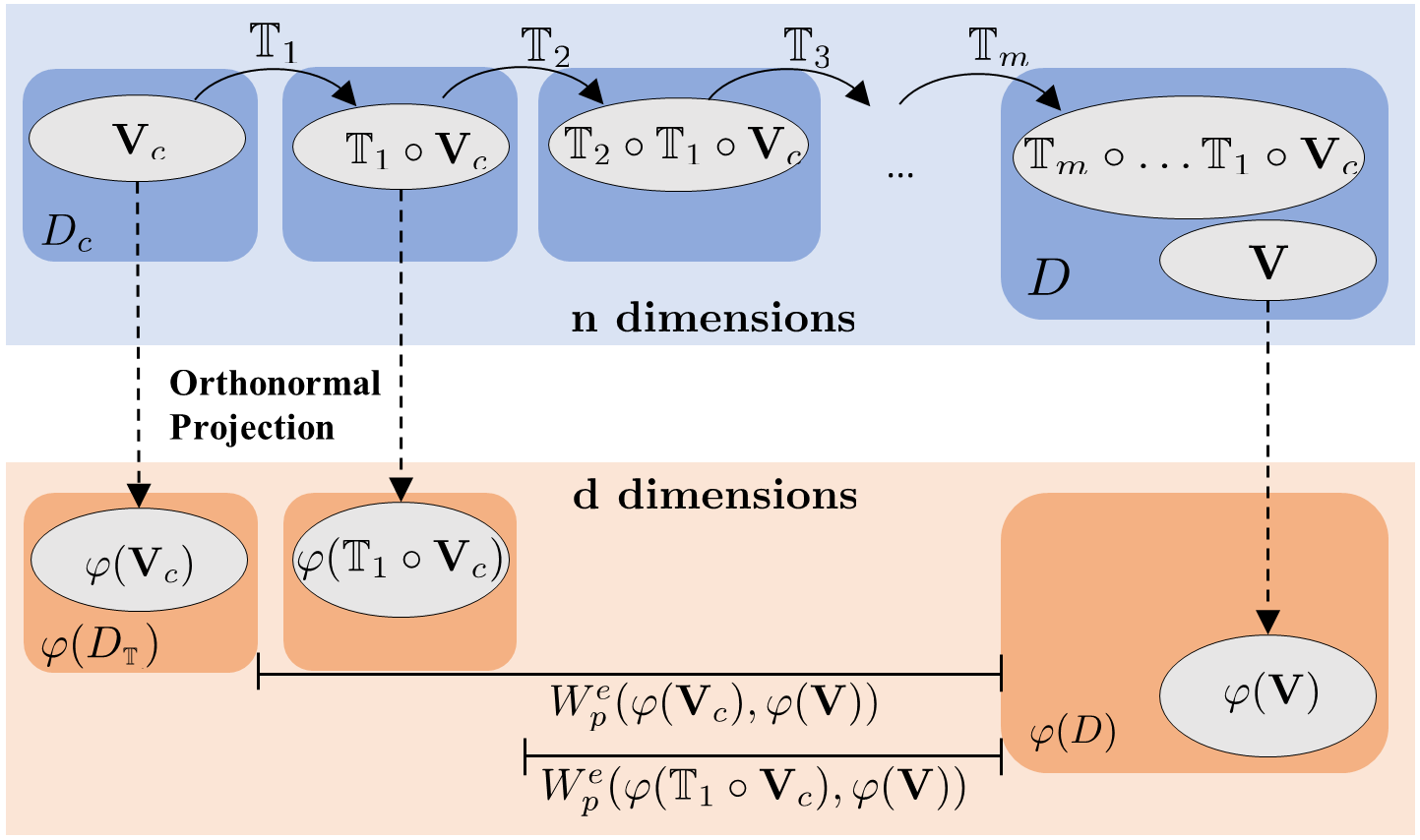}
    \end{center}
    \vspace{-8mm}
    \caption{\footnotesize Operation of \ours. Starting from $\mathbf{V}_c$, the algorithm selects a sequence of transforms $\mathcal{T}$ which move $\mathbf{V}_c$ closer to the original distribution $\mathbf{V}$. During the sequence, the orthonormal projections $\varphi({\mathbf{V}})$ and $\varphi({\mathbf{V}_c})$ are used to compute the Wasserstein distance $W_p(\varphi({\mathbf{V}_c}), \varphi(\mathbf{V}))$. See Section \ref{sec:orthonormal_proj} for details.
    \vspace{-4mm}
    }
    \label{fig:overall_diagram}
\end{wrapfigure}

\textbf{Computing Reward.} As shown in Figure \ref{fig:overall_diagram}, computing the reward for a set of images $\mathbf{V}_c$ corresponds to measuring the distance from a clean validation set $\mathbf{V}$. 
Ideally, the distance between the distributions from which $\mathbf{V}_c$ and $\mathbf{V}$ are drawn would be measured, but this is difficult without knowledge of the source distributions.
Instead, we use an empirical estimate of the Wasserstein distance~\citep{bonneel2011displacement} to compute the distributional distance between sets $\mathbf{V}_c$ and $\mathbf{V}$. It should be noted that a variety of distance functions can be employed here. We explore alternatives in Appendix \extref{ap:dist_functions}{A}\ext.

In practice the policy might not be able to reduce the $W_p^e$ to a level such that the classifier completely recovers the loss in accuracy. One reason for this is the possible non-existence of the inverse of the corruption transform. Another possible issue is that a transformation may overly alter the image such that the classifier performs poorly. Although we can combat against this by ensuring that actions make incremental changes to the image, this is hard to control. We therefore add a regularizer to the Wasserstein distance that penalizes excessive changes to the image. This is achieved using a visual similarity between pairs of images known as $ssim$ \citep{ssim}.

Given $\lambda > 0$ and $0\leq\omega<1$, the reward function is given by
\begin{equation} \textstyle
    R(s_t, a_t) = -W_p^e(\mathbf{V}, \mathbb{F}_R^{-1}(s_{t})) + \lambda L_S(\mathbb{F}_R^{-1}(s_{0}), \mathbb{F}_R^{-1}(s_{t+1})),
\end{equation}
where,\begin{equation}
\centering
\begin{aligned}
L_S(\mathbf{X}, \mathbf{Y}) = \begin{cases}
\log(1-ssim(\mathbf{X},\mathbf{Y})), & ssim(\mathbf{X},\mathbf{Y}) < \omega\\ 
0 & \text{otherwise}.
\end{cases}
\end{aligned}
\end{equation}
Note that the regularization hyperparameter $\lambda$ influences the level of aggression in correction. For example, selecting a large $\lambda$ favors actions with minimal influence on the data.


\textbf{Initial State.}
At test time, the initial state of the MDP is produced by a random environment corruption from the set $\mathbb{S}$ that the system is subjected to. In reality the designer does not have access to any of these corrupting transforms. Hence, at training time we train a policy network to reverse a set of \emph{surrogate} corruptions from $\mathbb{S}$, with the hope that some of these transfer at inference time to an unseen set of corruptions. An example surrogate corruption set and selection methodology for the ImageNet-C benchmark is given in Appendix~\extref{sec:experiments}{E}\ext.
We pick uniformly randomly from a finite set of $\mathsf{S}_c \subset \mathbb{S}$ of these surrogate corruptions to sample the initial state $\mathrm{I}_0$.

Standard RL techniques can be applied to learn a policy $\pi : \mathcal{S} \rightarrow \mathbb{A}$ for the MDP $\mathcal{E}$, which is a strategy to recover the distribution shift. The value of a state for policy $\pi$ is the expected return $\mathbf{R}_\pi(s_0)$, starting from state $s_0$, while executing the policy $\pi$ at every step, $a_t = \pi(s_t)$. 
The optimal policy $\pi^*$ maximizes the reward starting from the initial state distribution -- i.e., $\pi^*$ $=$ $\arg\max  V^{\mathrm{I}_0}(\pi)$, where $V^{\mathrm{I}_0}(\pi)$ $=$ $\mathbb{E}_{s_0 \in \mathrm{I}_0} [\mathbf{R}_\pi(s_0)]$.
\section{Applying a Learned Policy for Detection and Correction}
\label{sec:detection_and_correction}

\begin{wrapfigure}{r}{0.4\textwidth}
\vspace{-8mm}
\begin{minipage}[r]{0.4\textwidth}
\begin{algorithm}[H]
\caption{Transformation Selection}\label{alg:dsr}
\begin{algorithmic}[1]
\REQUIRE Validation set $\mathbf{V}$, corrupted set $\mathbf{V}_c$, policy $\pi$, \;horizon $k$, thresholds $\alpha,\beta\in(0,1]$
\STATE $w_0 \gets \tilde{W}(\mathbf{V}, \mathbf{V}_c)$ \label{line:init}
\STATE $\mathbf{V}_0 \gets \mathbf{V}_c$
\STATE $T \gets \{\;\}$
\FOR{$i=1,\dots,k$}

    \IF{$\mathbf{V}_i$ is inoperable} \label{line:inoperable}
        \STATE return $T$ 
    \ENDIF
    \STATE $\mathbb{T}_i \gets \pi(\mathbf{V}_{i-1})$
    \STATE $\mathbf{V}_i \gets \mathbb{T}_i(\mathbf{V}_{i-1})$
    \STATE $w_i \gets \tilde{W}(\mathbf{V}, \mathbf{V}_i)$
    \IF {$w_i \leq \alpha \,w_0 \vee w_i \geq \beta \,w_{i-1} $} \label{line:stop}
        \STATE return $T$
    \ENDIF
    \STATE $T \gets \{ \mathbb{T}_i \} \; \cup \; T$
\ENDFOR
\STATE Return $T$ \label{line:return}
\end{algorithmic}
\end{algorithm}
\vspace{-5mm}
\end{minipage}
\vspace{-5mm}
\end{wrapfigure}

As described in Section \ref{sec:training_rl}, a DNN policy $\pi$ is trained at design time such that, given a corrupted set $\textbf{V}_c$ of i.i.d observations from corrupted distribution $D_\mathbb{T}$, the policy generates a finite composition $\mathcal{I}_k$ of $k<\infty$ transforms from $\mathbb{S}$. This solves the optimization problem in Equation \ref{eq:optimization_problem}. At design time, we assume the algorithm has access to a validation set $\mathbf{V} = \{v_1,\dots,v_n\}$, where $v_i$ is drawn i.i.d from the training distribution $D$. 

At inference / run-time \ours{} uses the corrupted set $\mathbf{V}_c = \{v^c_1, v^c_2, \dots, v^c_n \}$  drawn i.i.d from $D_{\mathbb{T}}$, $\mathbb{T} \in\mathbb{S}$, and the policy $\pi$ to select a correcting sequence $\mathcal{T} := \{ \mathbb{T}_k, \mathbb{T}_{k-1}, \dots \mathbb{T}_1 \}$.
We fix a maximum horizon $k$ to keep the algorithm tractable at both learning and deployment. The procedure for selecting sequence $\mathcal{T}$ is presented in Algorithm~\ref{alg:dsr}.

In Line \ref{line:init}, Algorithm \ref{alg:dsr} uses the estimator $\tilde{W}$ to estimate the initial Wasserstein  distance. Next, it runs policy $\pi$ for $k$ steps (Lines \ref{line:inoperable}$-$\ref{line:return}), where it iteratively applies the transform picked by the policy to update the corrupted set. The algorithm uses a lower dimensional state representation of $\mathbf{V}$ for evaluating the policy and for the estimator $\tilde{W}$ (see Section~\ref{sec:training_rl}). The algorithm collects and returns this set of transforms in $T$.

There are two stopping criteria to prevent the selection of damaging transforms. The first, in line \ref{line:inoperable}, checks for a minimum condition of operability on the set (see Section~\ref{sec:min_conditions}). The second, in Line \ref{line:stop}, guards against overly transforming images to diminishing returns. It also hedges against the chance that the distribution shift is not semantic preserving, which is always possible in reality. Hence, we pause the policy when the Wasserstein distance decreases beyond a threshold.
\subsection{Minimum Condition for Operability}
\label{sec:min_conditions}

The optimal policy $\pi^*$ selects transformations that maximize the reward function, consisting of some distance function (e.g., the Wasserstein distance) plus a regularizing factor. The efficacy of such a policy thus depends on the distance function being a reliable estimate of a classifier's performance on the given data. We define an operable corrupted set $\textbf{V}_c$ as follows.

\begin{definition}[Operable Set]
\label{defn:operable_set}
For some distance function $d$, classifier $f$, validation set $\textbf{V}$, and set of transformations $\{\mathbb{T}_i\}_{i=1}^t$, a set $\textbf{V}_c = \{v^c_1,c^c_2,\dots,v^c_n\}$ is operable iff $\{d(\textbf{V}, \mathbb{T}_i(\textbf{V}_c))\}_{i=1}^t$ is negatively linearly correlated with $\{\frac{1}{n}\sum_{j=1}^n \mathbbm{1}[f(\mathbb{T}_i(v_j^c)) =  \mathcal{C}^*(\mathbb{T}_i(v_j^c))]\}_{i=1}^t$.
\end{definition}

Since the accuracy of the classifier we wish to adapt is unknown on inference-time data, operability is in practice difficult to evaluate. However, we can instead predict operability from the state representation $\mathbb{F}_R(\mathbf{V}_c)$ using a simple binary classifier, which is trained on surrogate corrupted data generated at design time. Label generation can be performed by evaluating the distance function and accuracies on these surrogate shifts, where $\mathbb{T}_i(\textbf{V}_c)$ are variations in the severity of shift approximating the effect of applying transformations. To evaluate accuracy, the selected classifier $f$ can be unique from the classifier we wish to adapt. This is motivated by the model collinearity phenomenon~\citep{mania2020classifier}, in which the relative accuracy on different distributions of data is often the same across multiple classifiers.
\section{Dimensionality Reduction}
\label{sec:orthonormal_proj}

The empirical estimate of the Wasserstein distance converges in sample size to the true Wasserstein distance slowly in large dimensions~\citep{ramdas2017wasserstein}. 
Ideally, by reducing the dimensionality of the sample data, fewer samples are needed to achieve an accurate estimate of the Wasserstein distance.
We present rigorous theoretical justification that orthonormal projection is a reduction technique that preserves characteristics of the sample data at a distributional level.

Let $m,n\in\mathbb{N}$ and $p\in [1,\infty]$. Call $M(\mathbb{R}^n)$ and $M(\mathbb{R}^m)$ the spaces of probability measures on $\mathbb{R}^n$ and $\mathbb{R}^m$, respectively. Denote by $M^p(\mathbb{R}^n)$ and $M^p(\mathbb{R}^m)$ the spaces of probability measures having finite $p$-th moment on $\mathbb{R}^n$ and $\mathbb{R}^m$, respectively (here $p = \infty$ is interpreted in the limiting sense of essential supremum). For convenience, we consider only probability measures with densities, so that we do not have to check which measure is absolutely continuous to which other measure \cite[Section III]{lim2}. 

Suppose $m\leq n$ and consider the Stiefel manifold on $m\times n$ matrices with orthonormal rows.
$$O(m,n):=\{V\in\mathbb{R}^{m\times n} : VV^\top=I_d\}.$$
For any $V\in O(m,n)$ and $b\in \mathbb{R}^m$, let
$$\varphi_{V,b}:\mathbb{R}^n \rightarrow \mathbb{R}^m, \quad x \mapsto \varphi_{V,b}(x):=Vx+b,$$
and for any $\mu\in M(\mathbb{R}^n)$, let $\varphi_{V,b}(\mu):=\mu\circ\varphi_{V,b}^{-1}$ be the pushforward measure. This can be seen as a projection of $\mu$ onto the smaller dimensional space $\mathbb{R}^m$, and we call it a \textit{Cai-Lim projection}; it is not unique: it depends on the choice of $V$ and $b$. Recall then the definition of the $p$-Wasserstein distance between $\mu,\nu\in M^p(\mathbb{R}^n)$:
$W_p(\mu,\nu):=\left[ \inf_{\gamma\in\Gamma(\mu,\nu)} \int_{\mathbb{R}^{2n}} \| x-y \|_2^p \text{ d}\gamma(x,y) \right]^{\frac{1}{p}},$
 where $\|\cdot\|_2$ denotes the Euclidean norm and 
 $\Gamma(\mu,\nu):=\{\gamma\in M(\mathbb{R}^{2n}) : \pi_1^n(\gamma)=\nu \text{, } \pi_2^n(\gamma)=\mu\}$
is the set of couplings between $\mu$ and $\nu$, where $\pi_1^n$ is  the projection onto the first $n$ coordinates and $\pi_2^n$ is  the projection onto the last $n$ coordinates. The following is an important result.

\begin{lemma}\label{lemma_1}
    \textbf{\cite[Lemma II.1]{lim2}} Let $m,n\in\mathbb{N}$ and $p\in [1,\infty]$, and assume $m\leq n$. For any $\mu,\nu\in M^p(\mathbb{R}^n)$, any $V\in O(m,n)$, and any $b\in\mathbb{R}^m$, we have that 
    $$W_p \left( \varphi_{V,b}(\mu),\varphi_{V,b}(\nu) \right) \leq W_p \left( \mu,\nu \right).$$
\end{lemma}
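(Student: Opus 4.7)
The plan is to use the standard ``pushforward of a coupling'' trick: I would take an arbitrary coupling $\gamma\in\Gamma(\mu,\nu)$, push it forward under the map $(x,y)\mapsto(\varphi_{V,b}(x),\varphi_{V,b}(y))$ to obtain a measure $\tilde{\gamma}$ on $\mathbb{R}^{2m}$, argue that $\tilde{\gamma}\in\Gamma(\varphi_{V,b}(\mu),\varphi_{V,b}(\nu))$, and then show that the transport cost of $\tilde{\gamma}$ is bounded above by that of $\gamma$. Taking infima over $\Gamma(\mu,\nu)$ will then yield the desired inequality.

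The geometric heart of the argument is that $\varphi_{V,b}$ is non-expansive in the Euclidean norm. Since $\varphi_{V,b}(x)-\varphi_{V,b}(y)=V(x-y)$, the translation $b$ cancels, and it suffices to prove $\|Vz\|_2\leq \|z\|_2$ for all $z\in\mathbb{R}^n$. This I would get from the definition of $O(m,n)$: orthonormality of the rows gives $VV^\top = I_m$, which forces $V^\top V$ to be a symmetric idempotent $n\times n$ matrix, i.e.\ an orthogonal projection with eigenvalues in $\{0,1\}$. Consequently $\|Vz\|_2^2 = z^\top V^\top V z \leq z^\top z = \|z\|_2^2$, as needed.

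With these two ingredients in hand, for $p\in[1,\infty)$ the change-of-variables formula for pushforwards gives
\begin{equation*}
\int_{\mathbb{R}^{2m}} \|u-v\|_2^p \,\mathrm{d}\tilde{\gamma}(u,v) \;=\; \int_{\mathbb{R}^{2n}} \|V(x-y)\|_2^p \,\mathrm{d}\gamma(x,y) \;\leq\; \int_{\mathbb{R}^{2n}} \|x-y\|_2^p \,\mathrm{d}\gamma(x,y).
\end{equation*}
Raising both sides to the $1/p$ power and taking the infimum over $\gamma\in\Gamma(\mu,\nu)$ completes the proof. The $p=\infty$ case is handled analogously: the pointwise bound $\|V(x-y)\|_2\leq \|x-y\|_2$ transports through the $\gamma$-essential supremum via the identity $\tilde{\gamma}\text{-ess\,sup}\,f = \gamma\text{-ess\,sup}\,(f\circ(\varphi_{V,b}\times\varphi_{V,b}))$.

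The only mildly delicate step, rather than a real obstacle, is checking that $\tilde{\gamma}$ actually belongs to $\Gamma(\varphi_{V,b}(\mu),\varphi_{V,b}(\nu))$; this reduces to the identity $\pi_i^m\circ(\varphi_{V,b}\times\varphi_{V,b})=\varphi_{V,b}\circ\pi_i^n$ for $i\in\{1,2\}$ together with the functoriality of pushforwards. The substantive content of the lemma is concentrated in the single spectral observation $VV^\top=I_m\Rightarrow \|Vz\|_2\leq\|z\|_2$; everything else is bookkeeping around the definition of $W_p$.
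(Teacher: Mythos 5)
The paper does not contain its own proof of this lemma; it is imported directly as Lemma II.1 of \citet{lim2}, and the argument there is the same standard one you outline. Your proposal is correct and complete: $VV^\top=I_m$ forces $V^\top V$ to be an orthogonal projection so that $\varphi_{V,b}$ is $1$-Lipschitz, and pushing an arbitrary coupling forward under $\varphi_{V,b}\times\varphi_{V,b}$ produces a coupling of the projected measures whose cost is no larger, with the $p=\infty$ case handled by the essential-supremum identity exactly as you state.
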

This can be interpreted as ``losing some information'' when performing a Cai-Lim projection: in smaller dimensional spaces, distributions $\mu$ and $\nu$ seem to be closer than they actually are. This is an inevitable byproduct of any projection operation. Lemma~\ref{lemma_1} implies the following corollary.

\begin{corollary}\label{cor_lemma1}
    Let $m,n\in\mathbb{N}$ and $p\in [1,\infty]$, and assume $m\leq n$. Consider $\mu,\nu,\rho,\zeta\in M^p(\mathbb{R}^n)$, and pick any $V\in O(m,n)$ and any $b\in\mathbb{R}^m$. Suppose $W_p(\mu,\nu)\geq W_p(\rho,\zeta)$. Then, there exists $\varepsilon>0$ such that if $W_p \left( \mu,\nu \right) - W_p \left( \varphi_{V,b}(\mu),\varphi_{V,b}(\nu) \right) \leq \varepsilon$, then 
    $$W_p\left( \varphi_{V,b}(\mu),\varphi_{V,b}(\nu) \right)\geq W_p\left( \varphi_{V,b}(\rho),\varphi_{V,b}(\zeta) \right).$$
\end{corollary}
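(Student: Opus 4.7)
The plan is to chain three inequalities, anchored on applying Lemma~\ref{lemma_1} to the pair $(\rho,\zeta)$. Define
$$\delta := W_p(\rho,\zeta) - W_p\!\left(\varphi_{V,b}(\rho),\varphi_{V,b}(\zeta)\right),$$
which is nonnegative by Lemma~\ref{lemma_1}. This $\delta$ measures how much ``information loss'' the Cai--Lim projection already inflicts on $(\rho,\zeta)$. The natural candidate for the threshold is $\varepsilon := \delta$ (or any positive number at most $\delta$) when $\delta > 0$: the idea is that if the projection compresses $W_p(\mu,\nu)$ by no more than it already compresses $W_p(\rho,\zeta)$, then the assumed gap $W_p(\mu,\nu) \geq W_p(\rho,\zeta)$ cannot be fully erased by projection.

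Concretely, under the hypothesis $W_p(\mu,\nu) - W_p(\varphi_{V,b}(\mu),\varphi_{V,b}(\nu)) \leq \varepsilon$, I would chain
\begin{align*}
W_p(\varphi_{V,b}(\mu),\varphi_{V,b}(\nu)) &\geq W_p(\mu,\nu) - \varepsilon \\
&\geq W_p(\rho,\zeta) - \delta \\
&= W_p(\varphi_{V,b}(\rho),\varphi_{V,b}(\zeta)),
\end{align*}
where the first line rearranges the hypothesis, the second uses $\varepsilon \leq \delta$ together with the standing assumption $W_p(\mu,\nu) \geq W_p(\rho,\zeta)$, and the third is the definition of $\delta$. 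This yields the desired conclusion.

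The one obstacle is the degenerate case $\delta = 0$, where Lemma~\ref{lemma_1} is tight on $(\rho,\zeta)$ and $\varepsilon := \delta$ is not a legal choice since the statement demands $\varepsilon > 0$. I would dispose of this by a side case-split on $\epsilon_0 := W_p(\mu,\nu) - W_p(\varphi_{V,b}(\mu),\varphi_{V,b}(\nu)) \geq 0$. If $\epsilon_0 = 0$, then $W_p(\varphi_{V,b}(\mu),\varphi_{V,b}(\nu)) = W_p(\mu,\nu) \geq W_p(\rho,\zeta) = W_p(\varphi_{V,b}(\rho),\varphi_{V,b}(\zeta))$, so any $\varepsilon > 0$ delivers the conclusion. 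If $\epsilon_0 > 0$, pick $\varepsilon := \epsilon_0/2$; then the hypothesis $\epsilon_0 \leq \varepsilon$ is impossible, so the implication holds vacuously. Together with the main chain above, this establishes the existence of a suitable $\varepsilon > 0$ in every case.
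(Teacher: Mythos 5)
Your proof is correct and takes essentially the same route as the paper: the paper simply sets $\varepsilon = W_p(\rho,\zeta) - W_p\left(\varphi_{V,b}(\rho),\varphi_{V,b}(\zeta)\right)$ and invokes Lemma~\ref{lemma_1}, which is exactly your main chain with $\varepsilon = \delta$. Your additional case split for $\delta = 0$ is a sound (and slightly more careful) patch for the edge case where the paper's choice of $\varepsilon$ would fail to be strictly positive, a point the paper glosses over.
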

\begin{proof} Set $\varepsilon=W_p \left( \rho,\zeta \right) - W_p \left( \varphi_{V,b}(\rho),\varphi_{V,b}(\zeta) \right)$. The result follows immediately by Lemma \ref{lemma_1}.
\end{proof}

Corollary \ref{cor_lemma1} states the following. If we ``do not lose too much information'' when performing a Cai-Lim projection of the two farthest apart distributions,
then the inequality $W_p(\mu,\nu)\geq W_p(\rho,\zeta)$ between the original distribution is preserved between their projections. A more intuitive discussion, as well as empirical justification, can be found in  Appendices~\extref{ap:orth_proj_intuition}{B} and~\extref{ap:orth_proj}{C}\ext.




\section{Related Work}
Distribution shifts can make deep learning models act dangerously in the real world \citep{conformance-paper}. Some offline techniques aim to improve classification robustness to distribution shift at training time. These include data augmentation, which expands the experiences that a model would be exposed to during training~\citep{hendrycks2019augmix,noisy_mix_paper,manifold_mix,cut_mix,puzzle_mix,deep_augment}. Another offline approach is domain adaptation, in which models are adapted to perform well on unlabeled data in some new distribution \citep{ben2010theory, bousmalis2017unsupervised, hoffman2018cycada}. Offline approaches lack flexibility to shifts unforeseen at train time, but they can be combined with online methods like \ours{} to further improve performance.

Alternate techniques aim to handle distribution shift online. For example, test-time adaptation methods perform additional training at inference time in response to incoming data \citep{gandelsman2022test, wang2022continual, mummadi2021test}. Test time augmentation approaches instead apply transformations to the input, then ensemble predictions if multiple transformations are applied \citep{simonyan2014very, krizhevsky2012imagenet, he2016deep, guo2017countering, mummadi2021test, kim2020learning, lyzhov2020greedy}. In contrast, \ours{} selects transformations online without querying the model. Furthermore, our work uniquely identifies MDPs as an equivalent formulation of the transformation selection problem, enabling the application of standard RL techniques. We further explore related work in Appendix~\extref{sec:related}{D}\ext.
\section{Application: ImageNet-C} 
\label{sec:application} \label{sec:policy_net}
The ImageNet-C dataset~\citep{hendrycks2019benchmarking} is constructed from ImageNet samples corrupted by 19 semantic preserving transformations. We deploy the learned policy $\pi$ in \ours{} to correct ImageNet-C corruptions, and we evaluate the accuracies of Resnet-50 classifiers with and without correction. We evaluate a baseline classifier trained without data augmentation and classifiers trained with data augmentation through AugMix, NoisyMix, DeepAugment, DeepAugment with Augmix, and Puzzlemix.
We also evaluate the ability to generalize outside of the ImageNet-C benchmark. Experiment details can be found in Appendix~\extref{sec:experiments}{E}\ext.\footnote{Our code can be found at \href{https://github.com/vwlin/SuperStAR}{https://github.com/vwlin/SuperStAR}.}

\begin{table*}[t]\scriptsize 
\caption{\footnotesize Average accuracies (\%) on each ImageNet-C shift with and without \ours{} for ResNet-50 classifiers. Accuracy improvement is denoted by $\Delta = \text{R (recovered)} - \text{S (shifted)}$. Values are over 5 severity levels with 3 trials each. "gaussian" refers to Gaussian noise.}
\vspace{-5mm}
\label{tab:imagenet_c_results}
\setlength{\tabcolsep}{1.7pt}
\begin{center}
\scriptsize
\begin{tabular}{|p{9ex}|p{5.2ex} p{5.2ex} p{5ex}|p{5.2ex} p{5.2ex} p{5ex}|p{5.2ex} p{5.2ex} p{5ex}|p{5.2ex} p{5.2ex} p{5ex}|p{5.2ex} p{5.2ex} p{5ex}|p{5.2ex} p{5.2ex} p{5ex}|}
    \cline{2-19}
    \multicolumn{1}{c}{} & \multicolumn{3}{|c|}{No Data Aug} & \multicolumn{3}{|c|}{AugMix} & \multicolumn{3}{|c|}{NoisyMix} & \multicolumn{3}{|c|}{DeepAugment} & \multicolumn{3}{|c|}{DeepAug+AugMix} & \multicolumn{3}{|c|}{PuzzleMix}\\
    \hline
    shift &
    S & R & $\Delta$ &
    S & R & $\Delta$ &
    S & R & $\Delta$ &
    S & R & $\Delta$ &
    S & R & $\Delta$ &
    S & R & $\Delta$\\
    \hline
    none &
    74.52 & 74.52 & 0.00 &
    75.94 & 75.94 & 0.00 &
    76.22 & 76.22 & 0.00 &
    75.86 & 75.86 & 0.00 &
    75.26 & 75.26 & 0.00 &
    75.63 & 75.63 & 0.00\\
    \hline
    gaussian &
    31.11 & 43.44 & \textcolor{red}{12.33} &
    41.90 & 50.87 & \textcolor{red}{8.98} &
    52.71 & 55.51 & \textcolor{red}{2.80} &
    59.07 & 59.48 & \textcolor{red}{0.41} &
    55.39 & 61.43 & \textcolor{red}{6.05} &
    41.48 & 46.94 & \textcolor{red}{5.46}\\
    \hline
    shot &
    28.61 & 42.81 & \textcolor{red}{14.21} &
    41.78 & 50.93 & \textcolor{red}{9.15} &
    51.81 & 55.38 & \textcolor{red}{3.57} &
    58.21 & 58.46 & \textcolor{red}{1.24} &
    55.76 & 62.37 & \textcolor{red}{6.61} &
    37.39 & 45.56 & \textcolor{red}{7.77}\\
    \hline
    impulse &
    26.57 & 39.36 & \textcolor{red}{12.79} &
    38.78 & 47.49 & \textcolor{red}{8.71} &
    50.73 & 53.37 & \textcolor{red}{2.64} &
    58.61 & 58.38 & -0.23 &
    55.16 & 60.67 & \textcolor{red}{5.50} &
    35.28 & 42.82 & \textcolor{red}{7.54}\\
    \hline
    snow &
    30.51 & 29.28 & -1.13 &
    37.89 & 36.85 & -1.04 &
    43.20 & 41.52 & -1.68 &
    41.71 & 39.91 & -1.80 &
    47.68 & 46.36 & -1.32 &
    39.48 & 37.74 & -1.75\\
    \hline
    frost &
    35.16 & 35.07 & -0.09 &
    41.39 & 41.24 & -0.15 &
    50.05 & 49.46 & -0.59 &
    46.87 & 46.08 & -0.78 &
    51.21 & 50.26 & -0.95 &
    46.96 & 45.75 & -1.21\\
    \hline
    brightness &
    65.17 & 65.72 & \textcolor{red}{0.55} &
    67.35 & 68.46 & \textcolor{red}{1.11} &
    68.82 & 69.87 & \textcolor{red}{1.05} &
    69.04 & 69.73 & \textcolor{red}{0.69} &
    69.42 & 70.18 & \textcolor{red}{0.76} &
    69.59 & 69.67 & \textcolor{red}{0.08}\\
    \hline
    contrast &
    35.56 & 37.69 & \textcolor{red}{2.14} &
    48.96 & 49.85 & \textcolor{red}{0.89} &
    50.37 & 52.74 & \textcolor{red}{2.37} &
    44.89 & 48.23 & \textcolor{red}{3.33} &
    56.01 & 57.40 & \textcolor{red}{1.39} &
    50.56 & 52.87 & \textcolor{red}{2.30}\\
    \hline
    speckle &
    36.09 & 49.26 & \textcolor{red}{13.18} &
    50.61 & 56.94 & \textcolor{red}{6.33} &
    57.67 & 60.88 & \textcolor{red}{3.22} &
    62.21 & 63.81 & \textcolor{red}{1.59} &
    60.93 & 65.66 & \textcolor{red}{4.74} &
    42.24 & 51.92 & \textcolor{red}{9.68}\\
    \hline
    spatter &
    46.65 & 46.44 & -0.20 &
    53.25 & 52.93 & -0.32 &
    57.63 & 57.34 & -0.29 &
    53.74 & 53.58 & -0.16 &
    57.75 & 57.61 & -0.14 &
    53.27 & 52.95 & -0.32\\
    \hline
    saturate &
    59.00 & 59.17 & \textcolor{red}{0.17} &
    61.42 & 61.89 & \textcolor{red}{0.47} &
    63.48 & 64.00 & \textcolor{red}{0.52} &
    64.59 & 64.81 & \textcolor{red}{0.22} &
    65.79 & 66.12 & \textcolor{red}{0.33} &
    65.96 & 65.60 & -0.37\\
    \hline
\end{tabular}
\end{center}
\vspace{-5mm}
\end{table*}

\textbf{ImageNet-C.} Table~\ref{tab:imagenet_c_results} summarizes the classifier accuracy improvements from applying \ours{} to ImageNet-C. For brevity, we exclude from Table~\ref{tab:imagenet_c_results} shifts for which \ours{} incurs no change in accuracy (see Appendix~\extref{sec:supp_imagenet_c_table}{F}\ext{} for full table). Corrections via our \ours{} algorithm lead to accuracy improvements for a majority of shifts, with maximum improvement of \maxICacc{} (averaged across all five severity levels). In general accuracy improvements are greater for higher severities (due to space constraints, per-severity accuracy improvements are shown in Appendix~\extref{sec:supp_imagenet_c_table}{F}\ext). Furthermore, when combined with data augmentation, \ours{} often leads to higher accuracies than \ours{} or data augmentation alone. In the cases of no shift and the shifts not shown in Table~\ref{tab:imagenet_c_results}, \ours{} refrains from taking any action and does not affect accuracy. This is examined further in Appendix~\extref{sec:inaction}{G}\ext.

Interestingly, for a hyperparameter selection that allows some increase in Wasserstein distance at each step ($\beta = 1.12$), we find that \ours{} selects a non-trivial 5-action sequence of transformations for contrast shift severity level 5, which incrementally increases the accuracy of the AugMix classifier. Figure~\ref{fig:example_sequence} shows a sample image with the applied transformations and resulting accuracies. The transformations incur a noticeable change in the image.

\textbf{Generalization beyond ImageNet-C.} We also evaluate the ability of \ours{} to generalize outside of the ImageNet-C benchmark. 1) We construct composite ImageNet-C shifts from pairs of the ImageNet-C corruptions for which \ours{} improves accuracy. To each shift, we reapply our operability classifier and policy network without retraining. Table~\ref{tab:imagenet_cp_results} shows the average classifier accuracy improvements from \ours{}. For nearly all shifts, our method improves classifier accuracy, with a maximum improvement in average accuracy of \maxICPacc{} (see Appendix~\extref{sec:composite_imagenetc}{H}\ext{} for severity level breakdowns). 2) We also apply our method to CIFAR-100-C~\citep{hendrycks2019benchmarking}, an analagous benchmark to ImageNet-C. We use the pretrained policy network and retrain only the operability classifier on the surrogate corruptions regenerated for CIFAR-100. We evaluate on a variety of Wide ResNets trained with data augmentation (AugMix, NoisyMix, and PuzzleMix) and without (baseline).\footnote{DeepAugment and DeepAugment with Augmix are not evaluated on CIFAR-100 in the original publications.} Appendix~\extref{sec:cifar_details}{I}\ext{} contains further details. Table~\ref{tab:cifar_results} shows the accuracy improvements from applying \ours{} to CIFAR-100-C. Without any retraining of the policy network, \ours{} improves average accuracy by up to \maxCCacc{} (see Appendix~\extref{sec:cifar100c}{J}\ext{} for full table and severity level breakdowns). In some cases, such as impulse noise, our method decreases accuracy for the classifiers trained with data augmentation, while increasing that of the baseline classifier. We attribute this to the operability classifier mislabeling such shifts as operable.

Overall, \ours{} demonstrates an ability to dynamically respond to distribution shift online, selecting context-appropriate actions (or inaction) and significantly improving classification accuracy on ImageNet-C for a variety of corruptions. In some cases, \ours{} identifies complex sequences of corrective transformations. We attribute this strong performance largely to our selection of surrogate shifts (see Appendix~\extref{sec:clustering}{K}\ext{} for more discussion). \ours{} also generalizes to distribution shifts outside of the ImageNet-C benchmark, without retraining the policy network. Finally, we note that although promising, \ours{} is limited in its speed of response and its reliance on appropriately selected actions and surrogate corruptions. We further discuss these limitations and more in Appendix~\extref{sec:limitations}{L}\ext.

\begin{table*}[t]\scriptsize 
\caption{\footnotesize Average accuracies (\%) on each composite ImageNet-C shift with and without \ours{} for ResNet-50 classifiers. Composites are combinations of Gaussian noise (GN), shot noise (ShN), impulse noise (IN), speckle noise (SpN) with brightness (B), contrast (C), saturate (S). Accuracy improvement is denoted $\Delta = \text{R (recovered)} - \text{S (shifted)}$. Values are over 5 severity levels with 3 trials each.}
\vspace{-5mm}
\label{tab:imagenet_cp_results}
\setlength{\tabcolsep}{1.7pt}
\begin{center}
\scriptsize
\begin{tabular}{|p{9ex}|p{5.2ex} p{5.2ex} p{5ex}|p{5.2ex} p{5.2ex} p{5ex}|p{5.2ex} p{5.2ex} p{5ex}|p{5.2ex} p{5.2ex} p{5ex}|p{5.2ex} p{5.2ex} p{5ex}|p{5.2ex} p{5.2ex} p{5ex}|}
    \cline{2-19}
    \multicolumn{1}{c}{} & \multicolumn{3}{|c|}{No Data Aug} & \multicolumn{3}{|c|}{AugMix} & \multicolumn{3}{|c|}{NoisyMix} & \multicolumn{3}{|c|}{DeepAugment} & \multicolumn{3}{|c|}{DeepAug+AugMix} & \multicolumn{3}{|c|}{PuzzleMix}\\
    \hline
    shift &
    S & R & $\Delta$ &
    S & R & $\Delta$ &
    S & R & $\Delta$ &
    S & R & $\Delta$ &
    S & R & $\Delta$ &
    S & R & $\Delta$\\
    \hline
    GN + B &
    23.75 & 30.65 & \textcolor{red}{6.90} &
    29.03 & 37.20 & \textcolor{red}{8.17} &
    42.87 & 44.37 & \textcolor{red}{1.50} &
    51.29 & 49.80 & -1.49 &
    43.03 & 52.17 & \textcolor{red}{9.11} &
    33.25 & 35.15 & \textcolor{red}{1.90}\\
    \hline
    GN + C &
    22.39 & 22.39 & 0.00 &
    29.40 & 29.40 & 0.00 &
    35.96 & 35.96 & 0.00 &
    34.00 & 34.00 & 0.00 &
    41.31 & 41.31 & 0.00 &
    32.46 & 32.46 & 0.00\\
    \hline
    GN + S &
    19.35 & 26.37 & \textcolor{red}{7.02} &
    26.65 & 33.43 & \textcolor{red}{6.79} &
    37.13 & 38.50 & \textcolor{red}{1.37} &
    42.83 & 44.50 & \textcolor{red}{1.68} &
    42.41 & 50.82 & \textcolor{red}{8.41} &
    28.96 & 30.98 & \textcolor{red}{2.02}\\
    \hline
    IN + B &
    22.03 & 30.08 & \textcolor{red}{8.04} &
    29.62 & 37.91 & \textcolor{red}{8.29} &
    43.66 & 45.89 & \textcolor{red}{2.24} &
    52.47 & 52.90 & \textcolor{red}{0.43} &
    44.14 & 53.95 & \textcolor{red}{9.81} &
    32.13 & 35.65 & \textcolor{red}{3.52} \\
    \hline
    IN + C &
    18.41 & 18.41 & 0.00 &
    25.89 & 25.89 & 0.00 &
    33.05 & 33.05 & 0.00 &
    31.61 & 31.61 & 0.00 &
    39.72 & 39.72 & 0.00 &
    26.27 & 26.27 & 0.00\\
    \hline
    IN + S &
    17.59 & 24.61 & \textcolor{red}{7.02} &
    25.24 & 30.37 & \textcolor{red}{5.12} &
    34.23 & 36.07 & \textcolor{red}{1.84} &
    40.96 & 42.34 & \textcolor{red}{1.37} &
    43.58 & 49.05 & \textcolor{red}{5.47} &
    25.74 & 29.37 & \textcolor{red}{3.62}\\
    \hline
    ShN + B &
    22.32 & 30.08 & \textcolor{red}{7.76} &
    28.38 & 35.82 & \textcolor{red}{7.44} &
    40.26 & 41.64 & \textcolor{red}{1.38} &
    49.44 & 47.65 & -1.79 &
    41.38 & 50.61 & \textcolor{red}{9.23} &
    31.84 & 33.63 & \textcolor{red}{1.79}\\
    \hline
    ShN + C &
    21.13 & 21.03 & -0.10 &
    28.36 & 27.94 & -0.42 &
    35.22 & 34.98 & -0.24 &
    34.26 & 33.96 & -0.30 &
    41.05 & 40.76 & -0.29 &
    30.13 & 30.26 & \textbf{0.13}\\
    \hline
    ShN + S &
    18.49 & 26.03 & \textcolor{red}{7.54} &
    27.33 & 33.74 & \textcolor{red}{6.40} &
    37.58 & 39.11 & \textcolor{red}{1.52} &
    42.47 & 44.84 & \textcolor{red}{2.37} &
    44.04 & 51.26 & \textcolor{red}{7.22} &
    26.33 & 30.03 & \textcolor{red}{3.70}\\
    \hline
    SpN + B &
    27.20 & 32.43 & \textcolor{red}{5.23} &
    32.64 & 38.55 & \textcolor{red}{5.91} &
    43.60 & 45.64 & \textcolor{red}{2.05} &
    52.77 & 52.37 & -0.41 &
    45.82 & 52.99 & \textcolor{red}{7.17} &
    35.80 & 37.81 & \textcolor{red}{2.01}\\
    \hline
    SpN + C &
    24.00 & 24.18 & \textcolor{red}{0.18} &
    32.70 & 32.58 & -0.12 &
    39.21 & 39.57 & \textcolor{red}{0.36} &
    36.30 & 36.98 & \textcolor{red}{0.68} &
    44.19 & 44.49 & \textcolor{red}{0.30} &
    33.40 & 33.74 & \textcolor{red}{0.34}\\
    \hline
    SpN + C &
    23.80 & 31.36 & \textcolor{red}{7.56} &
    35.87 & 40.22 & \textcolor{red}{4.35} &
    45.04 & 45.98 & \textcolor{red}{0.93} &
    47.64 & 50.87 & \textcolor{red}{3.23} &
    50.25 & 55.98 & \textcolor{red}{5.72} &
    30.55 & 35.94 & \textcolor{red}{5.39}\\
    \hline
\end{tabular}
\end{center}
\vspace{-5mm}
\end{table*}

\begin{table*}[t]\scriptsize 
\caption{\footnotesize Average accuracies (\%) on each CIFAR-100-C shift with and without \ours{} for Wide ResNet classifiers. Accuracy improvement is denoted by $\Delta = \text{R (recovered)} - \text{S (shifted)}$.
Values are over 5 severity levels with 3 trials each.}
\vspace{-5mm}
\label{tab:cifar_results}
\setlength{\tabcolsep}{2pt}
\begin{center}
\scriptsize
\begin{tabular}{|p{13ex}|p{5.5ex} p{5.5ex} p{5ex}|p{5.5ex} p{5.5ex} p{5ex}|p{5.5ex} p{5.5ex} p{5ex}|p{5.5ex} p{5.5ex} p{5ex}|p{5.5ex} p{5.5ex} p{5ex}|p{5.5ex} p{5.5ex} p{5ex}|}
    \cline{2-13}
    \multicolumn{1}{c}{} & \multicolumn{3}{|c|}{No Data Aug} & \multicolumn{3}{|c|}{AugMix} & \multicolumn{3}{|c|}{NoisyMix}  & \multicolumn{3}{|c|}{PuzzleMix}\\
    \hline
    shift &
    S & R & $\Delta$ &
    S & R & $\Delta$ &
    S & R & $\Delta$ &
    S & R & $\Delta$ \\
    \hline
    none &
    81.13 & 81.13 & 0.00 &
    76.28 & 76.28 & 0.00 &
    81.29 & 81.29 & 0.00 &
    84.01 & 84.01 & 0.00 \\
    \hline
    gaussian noise &
    21.12 & 26.81 & \textcolor{red}{5.70} &
    47.89 & 51.07 & \textcolor{red}{3.18} &
    65.91 & 66.34 & \textcolor{red}{0.43} &
    20.87 & 28.18 & \textcolor{red}{7.31} \\
    \hline
    shot noise &
    29.96 & 36.34 & \textcolor{red}{6.38} &
    55.69 & 58.24 & \textcolor{red}{2.55} &
    70.39 & 70.72 & \textcolor{red}{0.33} &
    31.12 & 39.37 & \textcolor{red}{8.25} \\
    \hline
    impulse noise &
    19.21 & 26.09 & \textcolor{red}{6.88} &
    59.68 & 59.09 & -0.59 &
    79.72 & 76.08 & -3.64 &
    37.18 & 37.01 & -0.17 \\
    \hline
    glass blur &
    20.68 & 26.61 & \textcolor{red}{5.93} &
    54.08 & 56.09 & \textcolor{red}{2.00} &
    58.82 & 60.48 & \textcolor{red}{1.66} &
    31.07 & 37.62 & \textcolor{red}{6.55} \\
    \hline
    speckle noise &
    31.58 & 35.76 & \textcolor{red}{4.18} &
    58.11 & 59.33 & \textcolor{red}{1.23} &
    71.67 & 71.57 & -0.09 &
    33.96 & 38.94 & \textcolor{red}{4.98} \\
    \hline
    spatter &
    61.23 & 62.23 & \textcolor{red}{1.00} &
    72.28 & 71.25 & -1.02 &
    78.11 & 77.44 & -0.67 &
    79.73 & 78.90 &  -0.83 \\
    \hline
    saturate &
    68.82 & 68.88 & \textcolor{red}{0.06} &
    64.52 & 64.77 & \textcolor{red}{0.25} &
    69.83 & 70.21 & \textcolor{red}{0.39} &
    72.93 & 72.99 & \textcolor{red}{0.05} \\
    \hline
\end{tabular}
\end{center}
\vspace{-5mm}
\end{table*}

\section{Conclusion}
In this work we presented \ours{}, which uses control for learned models to detect and recover from distribution shift. \ours{} uses the Wasserstein distance (with a theoretically-sound approach for dimensionality reduction using orthonormal projections) to detect distribution shifts and select recovery actions from a library of image correction techniques. We formulate this action selection problem as a Markov decision process, and we train the policy for computing actions using reinforcement learning. To hedge against harmful actions, we employ a binary classifier to check a minimum condition for our method to operate on corrupted data. We applied our approach to various classifiers on the ImageNet-C dataset, and we obtained significant accuracy improvements when compared to the classifiers alone. Finally, we showed that \ours{} generalizes to composite ImageNet-C shifts and CIFAR-100-C with no retraining of the policy. Expansion of the action library and additional tuning can lead to further improvements on these benchmarks.

\acks{
This research was supported in part by ARO W911NF-20-1-0080. The views and conclusions contained in this document are those of the authors and should not be interpreted as representing the official policies, either expressed or implied, of the Army Research Office or the U.S. Government.
}


\ifshowexternal
\else
\clearpage
\appendix

\section{Alternative Distance Functions}
\label{ap:dist_functions}
 Although we employ the Wasserstein distance to estimate the distance between a clean validation set of images and a corrupted set, a variety of alternatives can be used instead. For example, another natural choice for measuring the distance between two distributions is an $f$-divergence, such as the popular Total Variation (TV) distance, Kullback-Leibler (KL) divergence, and Jensen-Shannon (JS) divergence. In these cases, orthonormal projection can again be applied for dimensionality reduction, as equivalents to Proposition \ref{cor_lemma1} hold for $f$-divergences~\citep{lim2}. We note, however, that the TV distance, KL divergence, and JS divergence saturate when comparing distributions with disjoint supports, leading to potentially uninformative rewards when training the policy network. More reading on the TV distance, KL divergence, and JS divergence compared to the Wasserstein distance can be found in~\citet{arjovsky2017wasserstein}.

The previously mentioned methods for estimating the distance between two sets of images rely solely on the data itself, agnostic of the model performing inference on said data. While such distance functions lend flexibility across models and modalities, they fail to capitalize on information the model itself can provide. An alternative "distance function" that exploits the model is Agreement-on-the-Line (ALine-D), an approach for estimating the accuracy of a classifier on out-of-distribution data~\citep{baek2022agreement}. ALine-D builds on the empirical observation that many models exhibit strong linear correlation between in-distribution (ID) accuracy and out-of-distribution (OOD) accuracy~\citep{miller2021accuracy}. Combined with a similar phenomenon for the agreement among an ensemble of models on ID and OOD data, ALine-D estimates the accuracy of any ensemble member on OOD data, given only accuracies on ID data and agreements on ID/OOD data. This estimate can in turn be used to devise a distance function for use in our reward function, although incurring an additional computational burden from the evaluation of the ensemble. Ultimately, despite the benefits of ALine-D, we prioritize generalizability and instead select the Wasserstein distance.

\vfill

\newpage
\section{An Intuitive View of Orthonormal Projection for the Wasserstein Distance}
\label{ap:orth_proj_intuition}
As discussed in the main text, we show that orthonormal projection is a reduction technique that preserves characteristics of the sample data at a distributional level. An intuitive motivation for this arises from a metric we refer to as the \textit{Cai-Lim distance}~\citep{lim2}.

We assume the same preliminaries as in Section~\ref{sec:orthonormal_proj} of the main text. Now denote by
\begin{align*}
    \Phi^-(\mu,d):=\{&\beta\in M(\mathbb{R}^m): \varphi_{V,b}(\mu)=\beta,\\
    &\text{for some } V\in O(d,n), b\in \mathbb{R}^m\}
\end{align*}
the set of Cai-Lim projections of $\mu$ onto $\mathbb{R}^m$. We call \textit{Cai-Lim distance} between $\mu\in M^p(\mathbb{R}^n)$ and $\nu\in M^p(\mathbb{R}^m)$, $m\leq n$, the smallest $p$-Wasserstein distance between $\nu$ and a Cai-Lim projection of $\mu$ onto $\mathbb{R}^m$,  for some $p\in [1,\infty]$. That is, 
\begin{equation}
    \label{eq:cailim_dist}
    W_p^{CL}(\mu, \nu) :=\inf_{\beta\in\Phi^-(\mu,d)} W_p \left( \beta,\nu \right).
\end{equation}
We note that \textit{Cai-Lim distance} is closely related to Monge and Kantorovich’s formulations of the optimal transport problem (subsection~\ref{sec:monge} below). A generalization of the Wasserstein distance, the Cai-Lim distance allows for measurements of distance between distributions of different dimensions. It follows directly from ~\eqref{eq:cailim_dist} that when an orthonormal projection is applied, $\nu = \varphi_{V,0}(\mu)$, the Cai-Lim distance is trivially zero, $W_p^{CL}(\mu, \nu)=0$. This suggests that the orthonormal projection preserves information about the distance between two distributions.

\subsection{Cai-Lim Distance and Optimal Transport} \label{sec:monge}
Monge's formulation of the optimal transport problem can be formulated as follows. Let $\mathcal{X},\mathcal{Y}$ be two separable metric Radon spaces. Let $c:\mathcal{X}\times\mathcal{Y} \rightarrow [0,\infty]$ be a Borel-measurable function. Given probability measures $\mu$ on $\mathcal{X}$ and $\nu$ on $\mathcal{Y}$, Monge's formulation of the optimal transportation problem is to find a transport map $T:\mathcal{X}\rightarrow\mathcal{Y}$ that realizes the infimum
\begin{equation}\label{eq1}
    \inf_{T_\star(\mu)=\nu} \int_\mathcal{X} c(x,T(x)) \text{ d}\mu(x),
\end{equation}
where $T_\star(\mu)\equiv \mu\circ T^{-1}$ is the pushforward of $\mu$ by $T$. 

Monge's formulation of the optimal transportation problem can be ill-posed, because sometimes there is no $T$ satisfying $T_\star(\mu)=\nu$. Equation \eqref{eq1} can be improved by adopting Kantorovich's formulation of the optimal transportation problem, which is to find a probability measure $\gamma\in\Gamma(\mu,\nu)$ that attains the infimum
\begin{equation}\label{eq2}
    \inf_{\gamma\in\Gamma(\mu,\nu)} \int_{\mathcal{X}\times\mathcal{Y}} c(x,y) \text{ d}\gamma(x,y),
\end{equation}
where $\Gamma(\mu,\nu)$ is the set of joint probability measures on $\mathcal{X}\times\mathcal{Y}$ whose marginals are $
\mu$ on $\mathcal{X}$ and $\nu$ on $\mathcal{Y}$. A minimizer for this problem always exists when the cost function $c$ is lower semi-continuous and $\Gamma(\mu,\nu)$ is a tight collection of measures.

Our Cai-Lim distance formulation, Equation (\textcolor{red}{5}), can be written as 
\begin{equation}\label{eq3}
    \inf_{\beta\in\Phi^-(\mu,d)} \left[ \left( \inf_{\gamma\in\Gamma(\beta,\nu)} \int_{\mathbb{R}^{2d}} \|x-y\|_2^p \text{ d}\gamma(x,y) \right)^{1/p} \right].
\end{equation}
Therefore, \eqref{eq3} is a combination of \eqref{eq1} and \eqref{eq2}. To see this, notice that $\mathcal{X}=\mathcal{Y}=\mathbb{R}^d$, and $(x,y) \mapsto c(x,y)=\|x-y\|_2^p$, for some $p\in [1,\infty]$. Hence, the inner part of \eqref{eq3} corresponds to Kantorovich's formulation of the optimal transportation problem. The outer $\inf$, instead, has a Mongenian flavor to it. With this, we mean that the probability measure $\beta$ must be the one minimizing the $p$-Wasserstein distance between $\nu$ and all the elements in the set $\Phi^-(\mu,d)$ of Cai-Lim projections of $\mu$ onto $\mathbb{R}^d$, $d\leq n$. Because
$$\Phi^-(\mu,d):=\{\beta\in M(\mathbb{R}^m): \varphi_{V,b}(\mu)=\beta, \text{ for some } V\in O(d,n), b\in \mathbb{R}^m\},$$
and $\varphi_{V,b}(\mu) \equiv {\varphi_{V,b}^{-1}}_\star(\mu):=\mu\circ\varphi_{V,b}^{-1}$ is the pushforward of $\mu$ by $\varphi_{V,b}^{-1}$, this reminds us (heuristically) of Monge's formulation. The fact that \eqref{eq3} has a solution is guaranteed by \citet[Section II]{lim2}.

Some further references can be found at \citet{kantorovich,villani}.
\vfill

\newpage
\section{Empirical Evaluation of Orthonormal Projection}\label{ap:orth_proj}
Figures~\ref{fig:gauss-noise_proj},~\ref{fig:imp-noise_proj}, and~\ref{fig:gauss-blur_proj} compare the empirical Wasserstein distance using orthonomal projection to Gaussian random projection and sparse random projection when MNIST is perturbed with additive Gaussian noise, additive impulse noise, and Gaussian blur, respectively. We find that orthonormal projection better preserves distributional information than the alternatives.
\begin{figure}[ht] 
    \centering
    \includegraphics[width=0.8\textwidth]{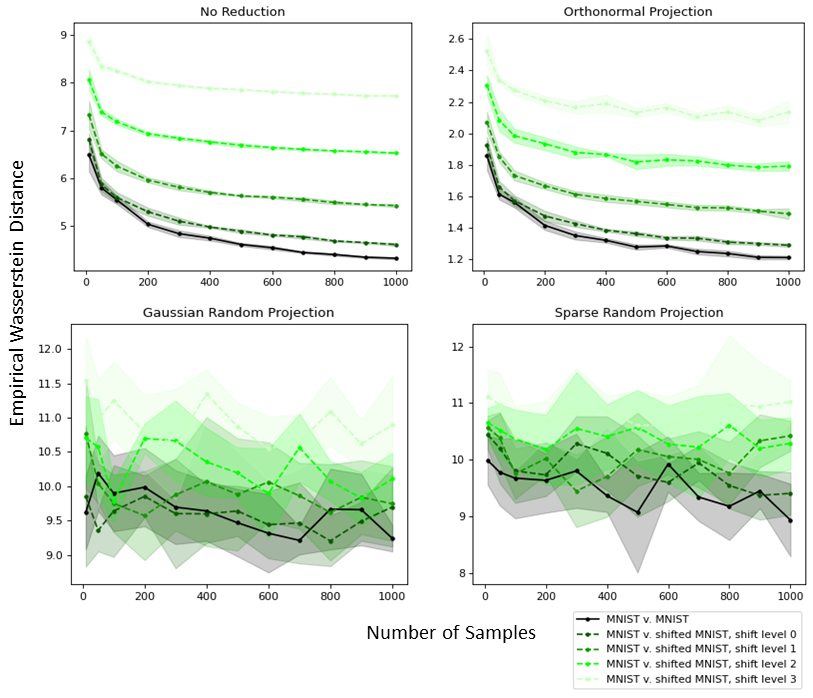}
    \caption{\small Empirical Wasserstein distance between MNIST and MNIST with varied levels of additive Gaussian noise, measured over a range of sample sizes. Curves are taken over 5 trials. MNIST samples are downsampled to 24$\times$24, flattened, and projected to 50 dimensions.
    Orthonormal projection better preserves distributional information than Gaussian random projection and sparse random projection.}\label{fig:gauss-noise_proj}
\end{figure}
\vfill

\newpage
\begin{figure}[ht]
    \centering
    \includegraphics[width=0.8\textwidth]{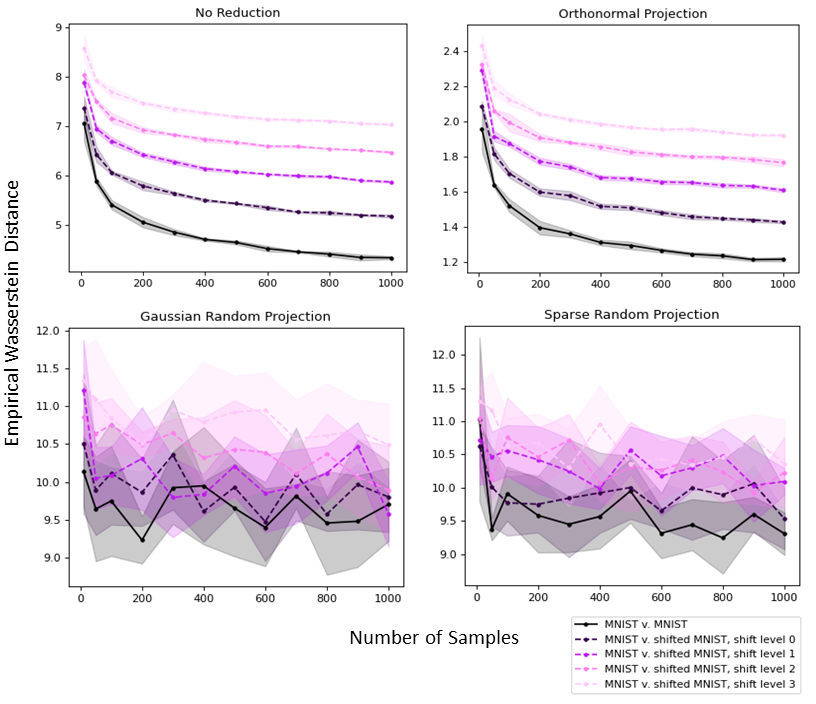}
    \caption{\small Empirical Wasserstein distance between MNIST and MNIST with varied levels of additive impulse noise, measured over a range of sample sizes. Curves are taken over 5 trials. MNIST samples are downsampled to 24$\times$24, flattened, and projected to 50 dimensions.
    Orthonormal projection better preserves distributional information than Gaussian random projection and sparse random projection.}\label{fig:imp-noise_proj}
\end{figure}
\vfill

\newpage
\begin{figure}[ht]
    \centering
    \includegraphics[width=0.8\textwidth]{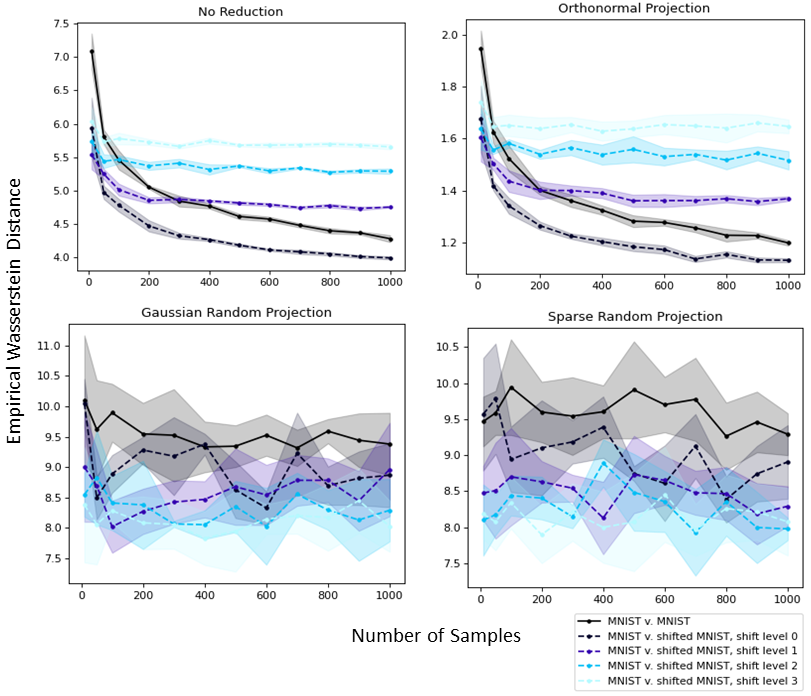}
    \caption{\small Empirical Wasserstein distance between MNIST and MNIST with varied levels of Gaussian blur, measured over a range of sample sizes. Curves are taken over 5 trials. MNIST samples are downsampled to 24$\times$24, flattened, and projected to 50 dimensions.
    Orthonormal projection better preserves distributional information than Gaussian random projection and sparse random projection.}\label{fig:gauss-blur_proj}
\end{figure}
\vfill

\newpage
\section{Related Work}
\label{sec:related}
\textbf{Offline approaches for classification robustness.}
Most common among offline approaches are data augmentation techniques, which expand the experiences that a model would be exposed to during training. AugMix~\citep{hendrycks2019augmix} applies a set of transformations (e.g., rotation, translation, shear, etc.) to distort the original image. NoisyMix \citep{noisy_mix_paper} further injects noises into the augmented examples. ManifoldMix \citep{manifold_mix} also uses a data augmentation approach, but focuses on generating examples from a lower-dimensional latent manifold. Other notable work in the recent literature includes CutMix \citep{cut_mix}, PuzzleMix \citep{puzzle_mix}, and DeepAugment \citep{deep_augment}.
Another offline approach is domain adaptation, in which models are trained on data from one distribution (i.e., source data) are adapted to perform well on unlabeled data in some other distribution (i.e., target data) \citep{ben2010theory}. Among the many domain adaptation techniques, input-level translation is most similar to our work. These techniques attempt to move the source distribution closer to the target distribution by augmenting the source data in the input or feature space \citep{bousmalis2017unsupervised, hoffman2018cycada}. Finally, a related but distinct offline technique for classification robustness is adversarial training~\citep{goodfellow2014explaining}, which aims to improve robustness to adversarial examples rather than distribution shift. In contrast to these offline approaches, our online technique allows for greater flexibility to distribution shifts unforeseen at train time. However, our method can be used in concert with offline approaches to further improve performance on corrupted samples.

\textbf{Online approaches for classification robustness.}
A broad literature adjust for distribution shift at inference time. Test-time adaptation methods, for example,  perform additional training at inference time to adapt models to incoming unlabeled data \cite{gandelsman2022test, wang2022continual, mummadi2021test}. A major challenge of such approaches is the generation of reliable pseudo-labels, especially for data under distribution shift.

Other online approaches, including ours, require no additional training of the model. Test time augmentation methods, for example, apply a set of transformations separately to the input, then ensemble the model's predictions on the different transformations of the data \citep{simonyan2014very, krizhevsky2012imagenet, he2016deep, guo2017countering, mummadi2021test}. In addition to the added complexity, a key disadvantage of this method is that the same transformations are applied to every input seen online. To overcome this, other techniques select the single best transformation among several candidates based on the model's performance on these candidates \citep{kim2020learning, lyzhov2020greedy}. Our method similarly selects transformations online and dynamically, but can do so without querying the model. Hence, when transformations are selected, they can be applied to any model to improve performance. Furthermore, our work uniquely identifies MDPs as an equivalent formulation of the transformation selection problem, allowing for the application of standard reinforcement learning techniques.

\textbf{Distribution shift detection.}
Out-of-distribution shift detection has gained significant interest as more machine learning models are deployed into safety critical systems \citep{kaur_codit_2022,hendrycks_baseline_2018,lee_training_2018}. 
In \citep{rabanser_failing_2019}, various approaches to dimensionality reduction are explored and empirical results demonstrating the benefit to distribution shift detection are presented. 
Our work expands on these observations and utilizes a theoretically sound approach of using orthogonal projections for dimensionality reduction.

\textbf{Deep learning for image restoration.}
Deep learning approaches for image restoration and recovery from corruption are well studied in the literature and exhaustive review is beyond the scope of this paper. 
Most of these approaches focus on a specific corruption type or a set of corruptions. 
For example, \citet{zhang_beyond_2017} approaches Gaussian denoising by constructing denoising convolutional feed-forward networks. 
For a recent survey of deep blurring approaches and comprehensive approaches targeting multiple sets of noises we refer to \citet{zhang_deep_2022,ledig_photo-realistic_2017,zamir_multi-stage_2021,vedaldi_learning_2020,zamir_learning_2022}.
Such approaches utilize a generative model to restore information lost due to a low resolution. 
These methods are limited in recovery and can result in image artifacts which degrade classification performance. 
Here, we attempt to alleviate such limitations by using a composition of transformations which are selected automatically depending on the context.
Additionally, the performance can be enhanced with additional image transformations in the action library (see Section \ref{sec:policy_net}).
\vfill

\newpage
\section{Experimental Details}
\label{sec:experiments}
\subsection{Surrogate Corruptions}
Since in practice the corrupting transformations present at test time are unknown to the designer, we select surrogate corruptions for training the operability classifier and the policy network. To maximize generalization, we select six surrogate corruptions which subject the images to a broad set of transformations and influence each dimension of their state representations. Each serves to increase or decrease one of entropy, brightness, or standard deviation. Below, we describe each of the corruptions in greater detail.


\textbf{Uniform Noise.}
We use additive uniform noise, $Uniform(a, b)$, as a representative of shifts that increase the entropy of the image. We select $b = -a \in \{0.14, 0.22, 0.32, 0.40, 0.90\}$ to generate five severity levels of uniform noise shift.

\textbf{Median Blur.}
We use median blur as a representative of shifts that decrease the entropy of the image. We select a square kernel size $k \in \{2, 3, 4, 5, 6\}$ to generate five severity levels of median blur shift.

\textbf{$\gamma$ Correction (Type A).}
We adjust the $\gamma$ value of the images, with $\gamma > 1$, to create a representative of shifts that increase the average brightness of the image. We select $\gamma \in \{1.4, 1.7, 2.0, 2.5, 3.0\}$ to generate five severity levels of this $\gamma$ correction.

\textbf{$\gamma$ Correction (Type B).}
We adjust the $\gamma$ value of the images, with $\gamma < 1$, to create a representative of shifts that decrease the average brightness of the image. We select $\gamma \in \{0.9, 0.8, 0.7, 0.6, 0.5\}$ to generate five severity levels of this $\gamma$ correction. 

\textbf{Sigmoid Correction (Type A).}
We perform sigmoid correction to create a representative of shifts that increase the standard deviation of the image. We select cutoff 0.5 and gain $g \in \{7, 8, 9, 10, 11\}$ to generate five severity levels of this sigmoid correction. 

\textbf{Sigmoid Correction (Type B).}
We perform sigmoid correction to create a representative of shifts that decrease the standard deviation of the image. We select cutoff 0.5 and gain $g \in \{7, 6, 5, 4, 3\}$ to generate five severity levels of this sigmoid correction. 

Figure~\ref{fig:sample_ims} shows samples of images under these surrogate corruptions. We now describe the criteria used to select the parameters of each corruption listed above for five severity levels. Given a random sample $\mathbf{V}_c$ of the surrogate corrupted data and a random sample $\mathbf{V}$ of ImageNet data, we select the parameter for severity level 5 (strongest) such that
\begin{equation}
    \left\vert \frac{\mathbb{F}_R(\mathbf{V}_c)_i - \mathbb{F}_R(\mathbf{V})_i}{\mathbb{F}_R(\mathbf{V})_i} \right\vert \in [0.3, 1.0],
\end{equation}
where $\mathbb{F}_R(\mathbf{X})_i$ denotes the $i$th index of $\mathbb{F}_R(\mathbf{X})$. We use $i=2$ for uniform noise and median blur, $i=0$ for $\gamma$ correction type A/B, and $i=1$ for sigmoid correction type A/B. For the severity levels 1-4, we select parameters such that there is large variation in $\mathbb{F}_R(\mathbf{X})_i$ across the five severity levels. 
\begin{figure}[ht]
    \centering
    \includegraphics[width=\textwidth]{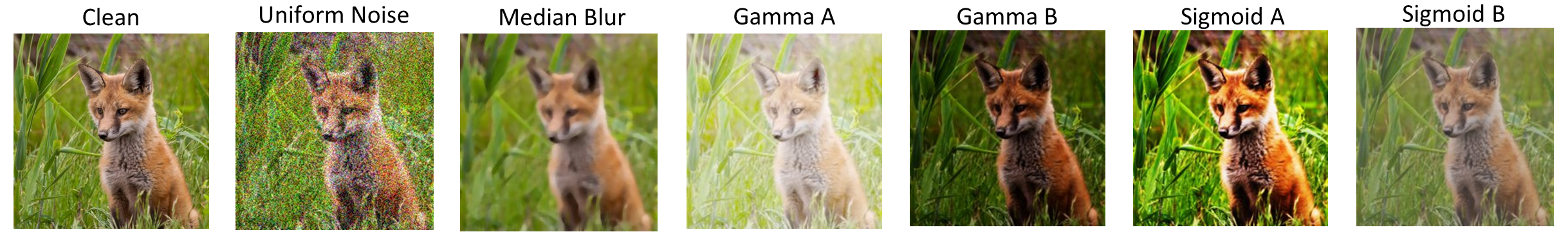}
    \caption{\small Sample images from ImageNet subjected to surrogate corruptions.}
    \label{fig:sample_ims}
\end{figure}

\newpage
\subsection{Operability Classifier.} 
On the surrogate corruptions, we train a binary decision tree with depth of 8 selected by a hyperparameter sweep, and using  Gini index as the split criterion. For label generation, we use a ResNet-50 classifier trained without data augmentation. The resulting classifier has an AUROC of 0.80 on a held out set of surrogate corrupted data. Of the six surrogate shifts, the operability classifier usually picks data under median blur data as inoperable and uniform noise data as operable. A full evaluation of the operability classifier is shown in Table~\ref{tab:imagenet_dt}.

\begin{table*}[ht]\scriptsize 
\caption{\small Percentage of images that the operability classifier labels as inoperable for surrogate and ImageNet-C shifts. Shifts gamma type A and gamma type B refer to $\gamma$ correction with $\gamma>1$ and $\gamma<1$, respectively. Shifts sigmoid type A and sigmoid type B refer to sigmoid correction with parameters that increase and decrease the standard deviation of the image, respectively. In general, the classifier labels blur-type shifts as inoperable most often. Conversely, the classifier labels noise-type shifts as operable most often.}
\label{tab:imagenet_dt}
\setlength{\tabcolsep}{2pt}
\begin{center}
\begin{tabular}{|p{22ex}|p{18ex}| p{3ex} |p{22ex}|p{18ex}|}
    \cline{1-2} \cline{4-5}
    \multicolumn{2}{|c|}{\textbf{Surrogates}} & & \multicolumn{2}{|c|}{\textbf{ImageNet-C}}\\
    \cline{1-2} \cline{4-5}
    \textbf{shift} & \textbf{\% inoperable} & & \textbf{shift} & \textbf{\% inoperable} \\
    \cline{1-2} \cline{4-5}
    uniform noise & 0.01 & & gaussian noise &
    0.004\\
    \cline{1-2} \cline{4-5}
    median blur & 48.69 & & shot noise & 0.02\\
    \cline{1-2} \cline{4-5}
    gamma type A & 8.72 & & impulse noise & 0.009\\
    \cline{1-2} \cline{4-5}
    gamma type B & 12.56 & & defocus blur & 80.35\\
    \cline{1-2} \cline{4-5}
    sigmoid type A & 7.10 & & glass blur & 70.22 \\
    \cline{1-2} \cline{4-5}
    sigmoid type B & 15.57 & & motion blur & 63.86\\
    \cline{1-2} \cline{4-5}
    \multicolumn{1}{c}{} & \multicolumn{1}{c}{} & & zoom blur & 66.76 \\
    \cline{4-5}
    \multicolumn{1}{c}{} & \multicolumn{1}{c}{} & & snow & 0.18\\
    \cline{4-5}
    \multicolumn{1}{c}{} & \multicolumn{1}{c}{} & & frost & 0.23\\
    \cline{4-5}
    \multicolumn{1}{c}{} & \multicolumn{1}{c}{} & & fog & 30.18\\
    \cline{4-5}
    \multicolumn{1}{c}{} & \multicolumn{1}{c}{} & & brightness & 6.99\\
    \cline{4-5}
    \multicolumn{1}{c}{} & \multicolumn{1}{c}{} & & contrast & 1.77\\
    \cline{4-5}
    \multicolumn{1}{c}{} & \multicolumn{1}{c}{} & & elastic & 23.09\\
    \cline{4-5}
    \multicolumn{1}{c}{} & \multicolumn{1}{c}{} & & pixelate & 43.83\\
    \cline{4-5}
    \multicolumn{1}{c}{} & \multicolumn{1}{c}{} & & jpeg & 41.63\\
    \cline{4-5}
    \multicolumn{1}{c}{} & \multicolumn{1}{c}{} & & speckle noise & 0.03\\
    \cline{4-5}
    \multicolumn{1}{c}{} & \multicolumn{1}{c}{} & & gaussian blur & 76.21\\
    \cline{4-5}
    \multicolumn{1}{c}{} & \multicolumn{1}{c}{} & & spatter & 2.26\\
    \cline{4-5}
    \multicolumn{1}{c}{} & \multicolumn{1}{c}{} & & saturate & 10.31 \\
    \cline{4-5}
\end{tabular}
\end{center}
\end{table*}

\subsection{Policy Network.} Similar to the operability classifier, the policy network must be trained on surrogate corruptions selected at design time. However, when selecting surrogates for the policy network, we must additionally limit the size of the state space that must be explored to avoid convergence issues. Thus, we select a subset of the six surrogate shifts previously described, consisting of two shifts with low inoperability rates, uniform noise and $\gamma$ correction with $\gamma > 1$.

We manually select an action library $\mathbb{A}$ of eight correcting transforms, which aim to address both of these surrogate shifts. For each transform, we select weak parameters to guard against single actions with irreversibly destructive effect. Simultaneously, this allows \ours{} to adapt accordingly to the noise severity. To target uniform noise, we include several denoising actions. These are a denoising convolutional neural network from the MATLAB Deep Learning Toolbox, a bilateral filter with filter size 2, wavelet denoising with BayesShrink thresholding~\citep{chang2000adaptive}, and wavelet denoising with VisuShrink thresholding~\citep{donoho1994ideal}. To target $\gamma$ correction, we include three Contrast Limited Adaptive Histogram Equalization (CLAHE) actions. Broadly, CLAHE applies histogram equalization over small tiles in the image, while limiting the allowable amount of contrast change. Our selected three CLAHE transformations, are CLAHE with (tile size, limit) of (2,1), (2,2), and (6,1). Finally, we include in $\mathbb{A}$ an action that does not enact any transformation, allowing for inaction in the presence of benign corruptions.

We train a deep neural network policy $\pi$ with actions $\mathbb{A}$ and surrogate corruptions defined above. Sequences are limited to five actions, but a shorter sequence may occur based on the stopping criteria in Algorithm~\ref{alg:dsr}. We choose our stopping condition threshold to be $\alpha = 0.9$ and $\beta = 0.995$. For the estimation of Wasserstein distances, we convert images to grayscale and project them to 5000 dimensions using a randomly generated orthonormal matrix. We select the reward hyperparameters $\lambda = 20$ and $\omega=0.994$. The neural network policy $\pi$ is trained using the advantage actor critic  algorithm. For both the actor and critic network, we employ a linear ReLU network with two hidden layers of 128 and 256 units. We train the actor and critic networks until convergence (about 1600 episodes) with learning rate $10^{-4}$, learning frequency 1, discount factor 0.9, and exploration rate decaying exponentially from 0.9 to 0.1 according the rule $0.9^{0.07*(episode+1)}$.

The training curves are shown in Figure~\ref{fig:training_curves}.
\begin{figure}[ht]
    \centering
    \includegraphics[width=0.5\textwidth]{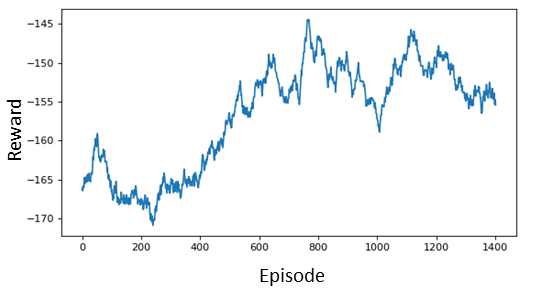}
    \caption{\small Reinforcement learning training curves, averaged with a moving window of 200 episodes. Values shown are the reward at episode termination.}
    \label{fig:training_curves}
\end{figure}
\vfill





\newpage
\section{Full Experimental Results for \ours{} on ImageNet-C Shifts}
\label{sec:supp_imagenet_c_table}
\begin{table*}[ht]\scriptsize 
\caption{\small Average accuracies (\%) on each ImageNet-C shift with and without \ours for ResNet-50 classifiers. Accuracy improvement is denoted by $\Delta = \text{R (recovered)} - \text{S (shifted)}$.
Values are over 5 severity levels with 3 trials each.}
\label{tab:supp_imagenet_cp_results}
\setlength{\tabcolsep}{2pt}
\begin{center}
\tiny
\begin{tabular}{|p{13ex}|p{6.5ex} p{6.5ex} p{7ex}|p{6.5ex} p{6.5ex} p{7ex}|p{6.5ex} p{6.5ex} p{7ex}|p{6.5ex} p{6.5ex} p{7ex}|p{6.5ex} p{6.5ex} p{7ex}|p{6.5ex} p{6.5ex} p{7ex}|}
    \cline{2-19}
    \multicolumn{1}{c}{} & \multicolumn{3}{|c|}{No Data Aug} & \multicolumn{3}{|c|}{AugMix} & \multicolumn{3}{|c|}{NoisyMix} & \multicolumn{3}{|c|}{DeepAugment} & \multicolumn{3}{|c|}{DeepAug+AugMix} & \multicolumn{3}{|c|}{PuzzleMix}\\
    \hline
    shift &
    S & R & $\Delta$ &
    S & R & $\Delta$ &
    S & R & $\Delta$ &
    S & R & $\Delta$ &
    S & R & $\Delta$ &
    S & R & $\Delta$\\
    \hline
    none &
    74.52 & 74.52 & 0.00 &
    75.94 & 75.94 & 0.00 &
    76.22 & 76.22 & 0.00 &
    75.86 & 75.86 & 0.00 &
    75.26 & 75.26 & 0.00 &
    75.63 & 75.63 & 0.00\\
    \hline
    gaussian noise &
    31.11 & 43.44 & \textbf{12.33} &
    41.90 & 50.87 & \textbf{8.98} &
    52.71 & 55.51 & \textbf{2.80} &
    59.07 & 59.48 & \textbf{0.41} &
    55.39 & 61.43 & \textbf{6.05} &
    41.48 & 46.94 & \textbf{5.46}\\
    \hline
    shot noise &
    28.61 & 42.81 & \textbf{14.21} &
    41.78 & 50.93 & \textbf{9.15} &
    51.81 & 55.38 & \textbf{3.57} &
    58.21 & 58.46 & \textbf{1.24} &
    55.76 & 62.37 & \textbf{6.61} &
    37.39 & 45.56 & \textbf{7.77}\\
    \hline
    impulse noise &
    26.57 & 39.36 & \textbf{12.79} &
    38.78 & 47.49 & \textbf{8.71} &
    50.73 & 53.37 & \textbf{2.64} &
    58.61 & 58.38 & -0.23 &
    55.16 & 60.67 & \textbf{5.50} &
    35.28 & 42.82 & \textbf{7.54}\\
    \hline
    defocus blur &
    35.21 & 35.21 & 0.00 &
    44.48 & 44.48 & 0.00 &
    44.72 & 44.72 & 0.00 &
    48.10 & 48.10 & 0.00 &
    55.52 & 55.52 & 0.00 &
    38.02 & 38.02 & 0.00\\
    \hline
    glass blur &
    25.55 & 25.55 & 0.00 &
    32.97 & 32.97 & 0.00 &
    35.71 & 35.71 & 0.00 &
    38.39 & 38.39 & 0.00 &
    44.45 & 44.45 & 0.00 &
    25.71 & 25.71 & 0.00\\
    \hline
    motion blur &
    36.25 & 36.25 & 0.00 &
    49.63 & 49.63 & 0.00 &
    49.53 & 49.53 & 0.00 &
    45.46 & 45.46 & 0.00 &
    57.56 & 57.56 & 0.00 &
    39.29 & 39.29 & 0.00\\
    \hline
    zoom blur &
    36.27 & 36.27 & 0.00 &
    47.45 & 47.45 & 0.00 &
    47.16 & 47.16 & 0.00 &
    39.84 & 39.84 & 0.00 &
    50.54 & 50.54 & 0.00 &
    39.85 & 39.85 & 0.00\\
    \hline
    snow &
    30.51 & 29.28 & -1.13 &
    37.89 & 36.85 & -1.04 &
    43.20 & 41.52 & -1.68 &
    41.71 & 39.91 & -1.80 &
    47.68 & 46.36 & -1.32 &
    39.48 & 37.74 & -1.75\\
    \hline
    frost &
    35.16 & 35.07 & -0.09 &
    41.39 & 41.24 & -0.15 &
    50.05 & 49.46 & -0.59 &
    46.87 & 46.08 & -0.78 &
    51.21 & 50.26 & -0.95 &
    46.96 & 45.75 & -1.21\\
    \hline
    fog &
    42.79 & 42.79 & 0.00 &
    44.97 & 44.97 & 0.00 &
    51.34 & 51.34 & 0.00 &
    49.88 & 49.88 & 0.00 &
    54.46 & 54.46 & 0.00 &
    55.62 & 55.62 & 0.00\\
    \hline
    brightness &
    65.17 & 65.72 & \textbf{0.55} &
    67.35 & 68.46 & \textbf{1.11} &
    68.82 & 69.87 & \textbf{1.05} &
    69.04 & 69.73 & \textbf{0.69} &
    69.42 & 70.18 & \textbf{0.76} &
    69.59 & 69.67 & \textbf{0.08}\\
    \hline
    contrast &
    35.56 & 37.69 & \textbf{2.14} &
    48.96 & 49.85 & \textbf{0.89} &
    50.37 & 52.74 & \textbf{2.37} &
    44.89 & 48.23 & \textbf{3.33} &
    56.01 & 57.40 & \textbf{1.39} &
    50.56 & 52.87 & \textbf{2.30}\\
    \hline
    elastic &
    43.24 & 43.24 & 0.00 &
    50.18 & 50.18 & 0.00 &
    51.02 & 51.02 & 0.00 &
    50.35 & 50.35 & 0.00 &
    53.26 & 53.26 & 0.00 &
    43.31 & 43.31 & 0.00\\
    \hline
    pixelate &
    45.51 & 45.51 & 0.00 &
    57.25 & 57.25 & 0.00 &
    54.23 & 54.23 & 0.00 &
    64.31 & 64.31 & 0.00 &
    67.31 & 67.31 & 0.00 &
    49.03 & 49.03 & 0.00\\
    \hline
    jpeg &
    52.47 & 52.47 & 0.00 &
    58.49 & 58.49 & 0.00 &
    61.85 & 61.85 & 0.00 &
    56.99 & 56.99 & 0.00 &
    61.31 & 61.31 & 0.00 &
    56.83 & 56.83 & 0.00\\
    \hline
    speckle noise &
    36.09 & 49.26 & \textbf{13.18} &
    50.61 & 56.94 & \textbf{6.33} &
    57.67 & 60.88 & \textbf{3.22} &
    62.21 & 63.81 & \textbf{1.59} &
    60.93 & 65.66 & \textbf{4.74} &
    42.24 & 51.92 & \textbf{9.68}\\
    \hline
    gaussian blur &
    38.08 & 38.08 & 0.00 &
    47.17 & 47.17 & 0.00 &
    47.30 & 47.30 & 0.00 &
    51.93 & 51.93 & 0.00 &
    57.53 & 57.53 & 0.00 &
    41.07 & 41.07 & 0.00\\
    \hline
    spatter &
    46.65 & 46.44 & -0.20 &
    53.25 & 52.93 & -0.32 &
    57.63 & 57.34 & -0.29 &
    53.74 & 53.58 & -0.16 &
    57.75 & 57.61 & -0.14 &
    53.27 & 52.95 & -0.32\\
    \hline
    saturate &
    59.00 & 59.17 & \textbf{0.17} &
    61.42 & 61.89 & \textbf{0.47} &
    63.48 & 64.00 & \textbf{0.52} &
    64.59 & 64.81 & \textbf{0.22} &
    65.79 & 66.12 & \textbf{0.33} &
    65.96 & 65.60 & -0.37\\
    \hline
\end{tabular}
\end{center}
\end{table*}


\begin{figure}[ht]
\centering\includegraphics[width=\linewidth]{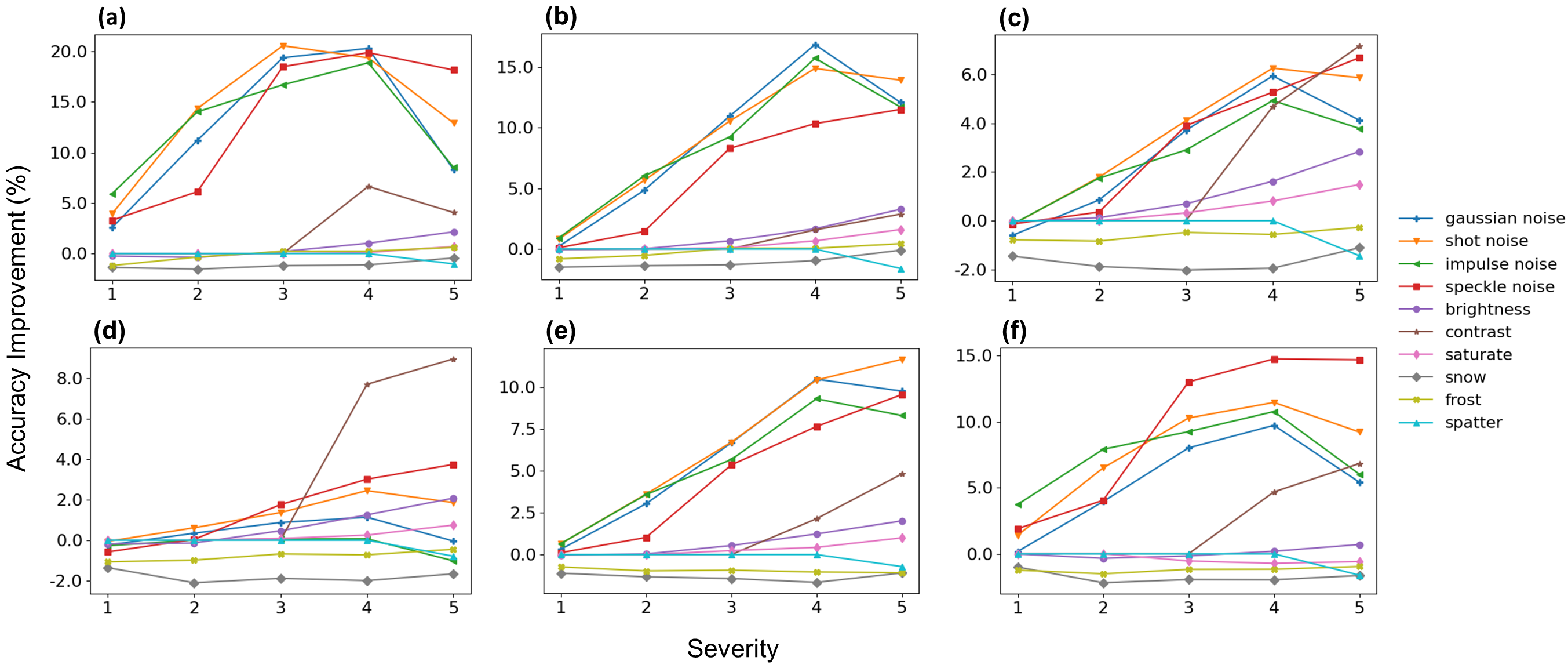}
    \caption{\small Accuracy improvements for increasing severity evaluated on classifiers trained with a) no data augmentation, b) AugMix, c) NoisyMix, d) DeepAugment, e) DeepAugment and AugMix, and f) PuzzleMix. In general, performance improvements are more pronounced for greater severities of corruption.}
\end{figure}
\vfill


\newpage
\section{Inaction for Select Shifts}
\label{sec:inaction}
As noted in the main text, \ours{} refrains from taking action when no shift is present and for a variety of ImageNet-C shifts. This inaction occurs due to a combination of both our operability classifier and our stopping conditions in Algorithm~\ref{alg:dsr}. For example, blur-type shifts are frequently labeled as inoperable: over 80\% of defocus blur images are inoperable, and similarly for other blur-type shifts (see Appendix~\ref{sec:experiments}). We also observe that inaction can at times be overly conservative. An example case is fog shift, where an appropriate action exists (e.g., CLAHE), yet no action is taken. We now explore this inaction on fog shift further.

Although \ours{} chooses to take no action on fog shift, there exists an appropriate action for this shift. In fact, Contrast Limited Adaptive Histogram Equalization (CLAHE) can achieve a 4.15\% accuracy improvement, averaged across all five severity levels, on the AugMix classifier. Further, if allowed to take the full 5-step sequence of actions, \ours{} would initially select a CLAHE action to correct for fog shift. This is supported by Figure~\ref{fig:kmeans_fog}, which shows that samples from fog shift have a state representation similar to those from our surrogate $\gamma$ correction (with $\gamma > 1$) shift and other ImageNet-C shifts for which CLAHE is a strong correction (i.e., brightness, contrast, and saturate), leading \ours{} to select a similar action for fog shift. However, \ours{} uses stopping conditions that can prematurely terminate its procedure. The nature of fog corruption is such that, after taking the first action, \ours{} decides to terminate the process. We speculate that this can be improved by using larger batch sizes for the validation set, which would allow for a more accurate estimate of the Wasserstein distance. Simultaneously, this would come at a higher computational cost for training the RL policy.

\begin{figure*}[ht]
    \centering
        \includegraphics[width=0.6\textwidth]{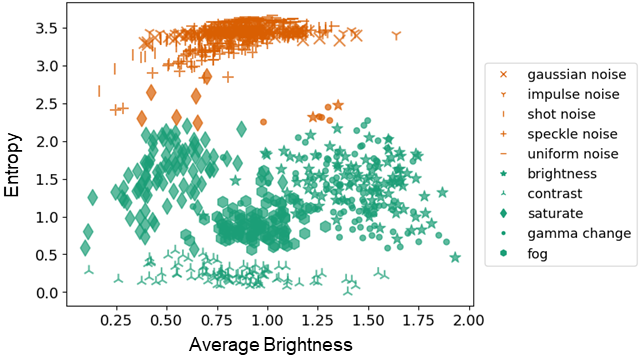}
    \caption{\small $K$-means clustering of sampled state representations for brightness, contrast, and saturate shifts from ImageNet-C, fog shift from ImageNet-C, and our surrogates. 100 samples were taken from each shift. $K=2$ clusters were chosen by selecting $K$ from the range $[1,10]$ via the elbow method. The projection onto the entropy and average brightness dimensions of the state representation is shown. Samples from fog shift are clustered with those from the $\gamma$ correction, brightness, contrast, and saturate shifts.}
    \label{fig:kmeans_fog}
\end{figure*}
\vfill

\newpage
\section{Full Experimental Results for \ours{} on Composite ImageNet-C Shifts}
\label{sec:composite_imagenetc}
\begin{figure}[ht]
\centering\includegraphics[width=\linewidth]{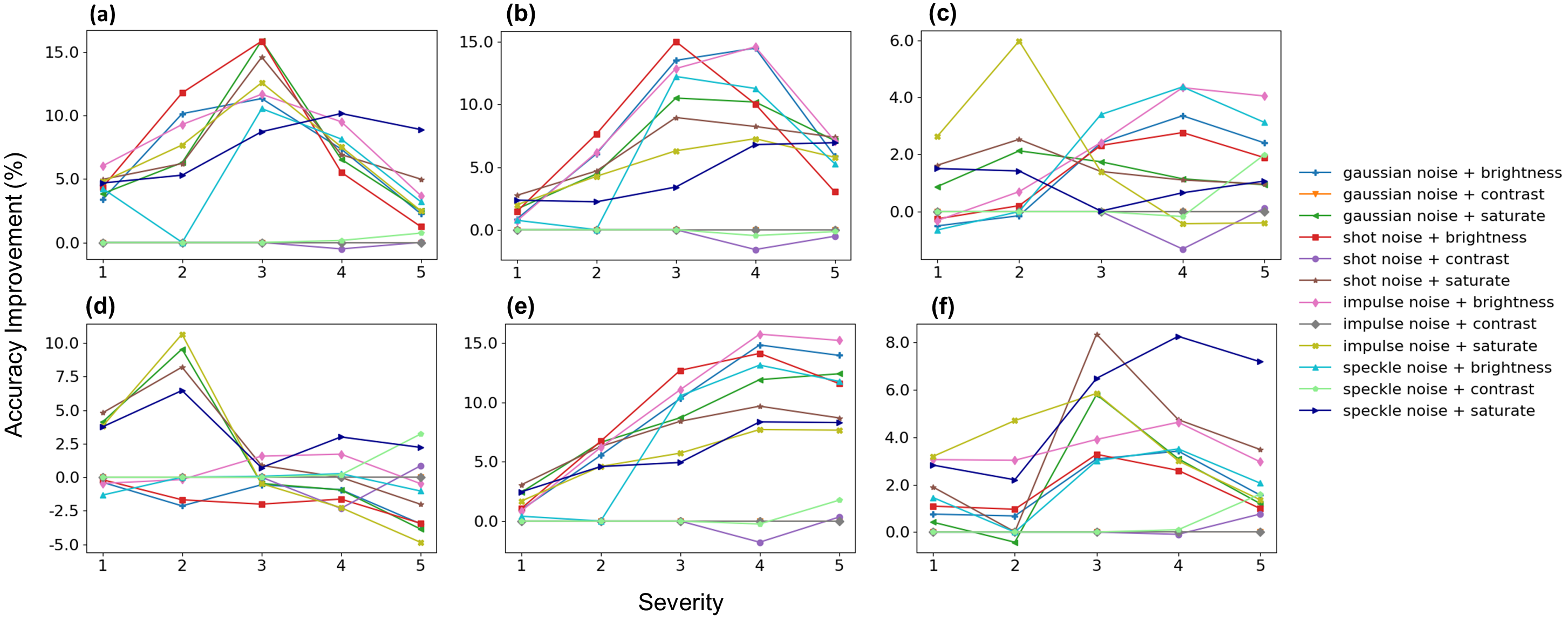}
    \caption{\small Accuracy improvements on composite ImageNet-C shifts for increasing severity evaluated on ImageNet classifiers trained with a) no data augmentation, b) AugMix, c) NoisyMix, d) DeepAugment, e) DeepAugment and AugMix, and f) PuzzleMix.}
\end{figure}

\vfill

\newpage
\section{Experimental Details for CIFAR-100-C}
\label{sec:cifar_details}
For the CIFAR-100-C benchmark, we retrain only the operability classifier. We train on the same six surrogates, regenerated for CIFAR-100. For label generation, we use a Wide ResNet with depth 28 and widening factor 10 (WRN-28-10) trained on CIFAR-100 without data augmentation~\citep{zagoruyko2016wide}. The resulting operability classifier has an AUROC of 0.73 on a held out set of surrogate data. Predictions made by the operability classifier on surrogate and CIFAR-100-C shifts are shown in Table~\ref{tab:cifar_dt}. Lastly, we update our stopping condition parameters to $\alpha = 0.9$, $\beta = 0.972$.

For evaluation, we reuse the WRN-28-10. We also evaluate on WRNs trained with AugMix (WRN-40-2), NoisyMix (WRN-28-4), and PuzzleMix (WRN-28-10).  

\begin{table*}[ht]\scriptsize 
\caption{\small Percentage of images that the operability classifier labels as inoperable for surrogate and CIFAR-100-C shifts. Shifts gamma type A and gamma type B refer to $\gamma$ correction with $\gamma>1$ and $\gamma<1$, respectively. Shifts sigmoid type A and sigmoid type B refer to sigmoid correction with parameters that increase and decrease the standard deviation of the image, respectively. In general, the classifier labels blur-type shifts as (with the exception of glass blur) inoperable most often. Conversely, the classifier labels noise-type shifts as operable most often.}
\label{tab:cifar_dt}
\setlength{\tabcolsep}{2pt}
\begin{center}
\begin{tabular}{|p{22ex}|p{18ex}| p{3ex} |p{22ex}|p{18ex}|}
    \cline{1-2} \cline{4-5}
    \multicolumn{2}{|c|}{\textbf{Surrogates}} & & \multicolumn{2}{|c|}{\textbf{CIFAR-100-C}}\\
    \cline{1-2} \cline{4-5}
    \textbf{shift} & \textbf{\% inoperable} & & \textbf{shift} & \textbf{\% inoperable} \\
    \cline{1-2} \cline{4-5}
    uniform noise & 0.002 & & gaussian noise &
    0.004\\
    \cline{1-2} \cline{4-5}
    median blur & 43.88 & & shot noise & 0.03\\
    \cline{1-2} \cline{4-5}
    gamma type A & 6.72 & & impulse noise & 1.65\\
    \cline{1-2} \cline{4-5}
    gamma type B & 11.30 & & defocus blur & 44.34\\
    \cline{1-2} \cline{4-5}
    sigmoid type A & 7.05 & & glass blur & 2.59\\
    \cline{1-2} \cline{4-5}
    sigmoid type B & 15.09 & & motion blur & 37.62\\
    \cline{1-2} \cline{4-5}
    \multicolumn{1}{c}{} & \multicolumn{1}{c}{} & & zoom blur & 45.73\\
    \cline{4-5}
    \multicolumn{1}{c}{} & \multicolumn{1}{c}{} & & snow & 3.42\\
    \cline{4-5}
    \multicolumn{1}{c}{} & \multicolumn{1}{c}{} & & frost & 1.90\\
    \cline{4-5}
    \multicolumn{1}{c}{} & \multicolumn{1}{c}{} & & fog & 25.83\\
    \cline{4-5}
    \multicolumn{1}{c}{} & \multicolumn{1}{c}{} & & brightness & 9.41\\
    \cline{4-5}
    \multicolumn{1}{c}{} & \multicolumn{1}{c}{} & & contrast & 20.97\\
    \cline{4-5}
    \multicolumn{1}{c}{} & \multicolumn{1}{c}{} & & elastic & 24.73\\
    \cline{4-5}
    \multicolumn{1}{c}{} & \multicolumn{1}{c}{} & & pixelate & 26.80\\
    \cline{4-5}
    \multicolumn{1}{c}{} & \multicolumn{1}{c}{} & & jpeg & 15.39\\
    \cline{4-5}
    \multicolumn{1}{c}{} & \multicolumn{1}{c}{} & & speckle noise & 0.04\\
    \cline{4-5}
    \multicolumn{1}{c}{} & \multicolumn{1}{c}{} & & gaussian blur & 51.42\\
    \cline{4-5}
    \multicolumn{1}{c}{} & \multicolumn{1}{c}{} & & spatter & 4.70\\
    \cline{4-5}
    \multicolumn{1}{c}{} & \multicolumn{1}{c}{} & & saturate & 10.63\\
    \cline{4-5}
\end{tabular}
\end{center}
\end{table*}
\vfill

\newpage
\section{Full Experimental Results for \ours{} on CIFAR-100-C Shifts}
\label{sec:cifar100c}
\begin{table*}[ht]\scriptsize 
\caption{\small Average accuracies (\%) on each CIFAR-100-C shift with and without \ours{} for Wide ResNet classifiers. Accuracy improvement is denoted by $\Delta = \text{R (recovered)} - \text{S (shifted)}$.
Values are over 5 severity levels with 3 trials each.}
\label{tab:supp_cifar_results}
\setlength{\tabcolsep}{2pt}
\begin{center}
\tiny
\begin{tabular}{|p{13ex}|p{6.5ex} p{6.5ex} p{7ex}|p{6.5ex} p{6.5ex} p{7ex}|p{6.5ex} p{6.5ex} p{7ex}|p{6.5ex} p{6.5ex} p{7ex}|}
    \cline{2-13}
    \multicolumn{1}{c}{} & \multicolumn{3}{|c|}{No Data Aug} & \multicolumn{3}{|c|}{AugMix} & \multicolumn{3}{|c|}{NoisyMix}  & \multicolumn{3}{|c|}{PuzzleMix}\\
    \hline
    shift &
    S & R & $\Delta$ &
    S & R & $\Delta$ &
    S & R & $\Delta$ &
    S & R & $\Delta$ \\
    \hline
    none &
    81.13 & 81.13 & 0.00 &
    76.28 & 76.28 & 0.00 &
    81.29 & 81.29 & 0.00 &
    84.01 & 84.01 & 0.00 \\
    \hline
    gaussian noise &
    21.12 & 26.81 & \textbf{5.70} &
    47.89 & 51.07 & \textbf{3.18} &
    65.91 & 66.34 & \textbf{0.43} &
    20.87 & 28.18 & \textbf{7.31} \\
    \hline
    shot noise &
    29.96 & 36.34 & \textbf{6.38} &
    55.69 & 58.24 & \textbf{2.55} &
    70.39 & 70.72 & \textbf{0.33} &
    31.12 & 39.37 & \textbf{8.25} \\
    \hline
    impulse noise &
    19.21 & 26.09 & \textbf{6.88} &
    59.68 & 59.09 & -0.59 &
    79.72 & 76.08 & -3.64 &
    37.18 & 37.01 & -0.17 \\
    \hline
    defocus blur &
    64.44 & 64.44 & 0.00 &
    73.42 & 73.42 & 0.00 &
    78.23 & 78.23 & 0.00 &
    69.92 & 69.92 & 0.00 \\
    \hline
    glass blur &
    20.68 & 26.61 & \textbf{5.93} &
    54.08 & 56.09 & \textbf{2.00} &
    58.82 & 60.48 & \textbf{1.66} &
    31.07 & 37.62 & \textbf{6.55} \\
    \hline
    motion blur &
    60.25 & 60.25 & 0.00 &
    70.46 & 70.46 & 0.00 &
    74.73 & 74.73 & 0.00 &
    66.14 & 66.14 & 0.00\\
    \hline
    zoom blur &
    59.58 & 59.58 & 0.00 &
    71.83 & 71.83 & 0.00 &
    76.71 & 76.71 & 0.00 &
    65.10 & 65.10 & 0.00\\
    \hline
    snow &
    59.07 & 59.07 & 0.00 &
    65.83 & 65.83 & 0.00 &
    71.83 & 71.83 & 0.00 &
    70.99 & 70.99 & 0.00 \\
    \hline
    frost &
    54.35 & 54.35 & 0.00 &
    63.69 & 63.69 & 0.00 &
    71.00 & 71.00 & 0.00 &
    65.21 & 65.21 & 0.00 \\
    \hline
    fog &
    71.25 & 71.25 & 0.00 &
    66.59 & 66.59 & 0.00 &
    72.84 & 72.84 & 0.00 &
    77.21 & 77.21 & 0.00 \\
    \hline
    brightness &
    76.94 & 76.94 & 0.00 &
    73.50 & 73.50 & 0.00 &
    78.32 & 78.32 & 0.00 &
    80.13 & 80.13 & 0.00 \\
    \hline
    contrast &
    62.97 & 62.97 & 0.00 &
    65.25 & 65.25 & 0.00 &
    68.03 & 68.03 & 0.00 &
    72.88 & 72.88 & 0.00 \\
    \hline
    elastic &
    64.23 & 64.23 & 0.00 &
    68.09 & 68.09 & 0.00 &
    73.43 & 73.43 & 0.00 &
    68.58 & 68.58 & 0.00 \\
    \hline
    pixelate &
    54.28 & 54.28 & 0.00 &
    63.54 & 63.54 & 0.00 &
    70.46 & 70.46 & 0.00 &
    52.34 & 52.34 & 0.00 \\
    \hline
    jpeg &
    50.40 & 50.40 & 0.00 &
    62.11 & 62.11 & 0.00 &
    69.24 & 69.24 & 0.00 &
    51.71 & 51.71 & 0.00 \\
    \hline
    speckle noise &
    31.58 & 35.76 & \textbf{4.18} &
    58.11 & 59.33 & \textbf{1.23} &
    71.67 & 71.57 & -0.09 &
    33.96 & 38.94 & \textbf{4.98} \\
    \hline
    gaussian blur &
    54.11 & 54.11 & 0.00 &
    71.48 & 71.48 & 0.00 &
    76.74 & 76.74 & 0.00 &
    61.11 & 61.11 & 0.00 \\
    \hline
    spatter &
    61.23 & 62.23 & \textbf{1.00} &
    72.28 & 71.25 & -1.02 &
    78.11 & 77.44 & -0.67 &
    79.73 & 78.90 &  -0.83 \\
    \hline
    saturate &
    68.82 & 68.88 & \textbf{0.06} &
    64.52 & 64.77 & \textbf{0.25} &
    69.83 & 70.21 & \textbf{0.39} &
    72.93 & 72.99 & \textbf{0.05} \\
    \hline
\end{tabular}
\end{center}
\end{table*}


\begin{figure}[ht]
\centering\includegraphics[width=0.7\linewidth]{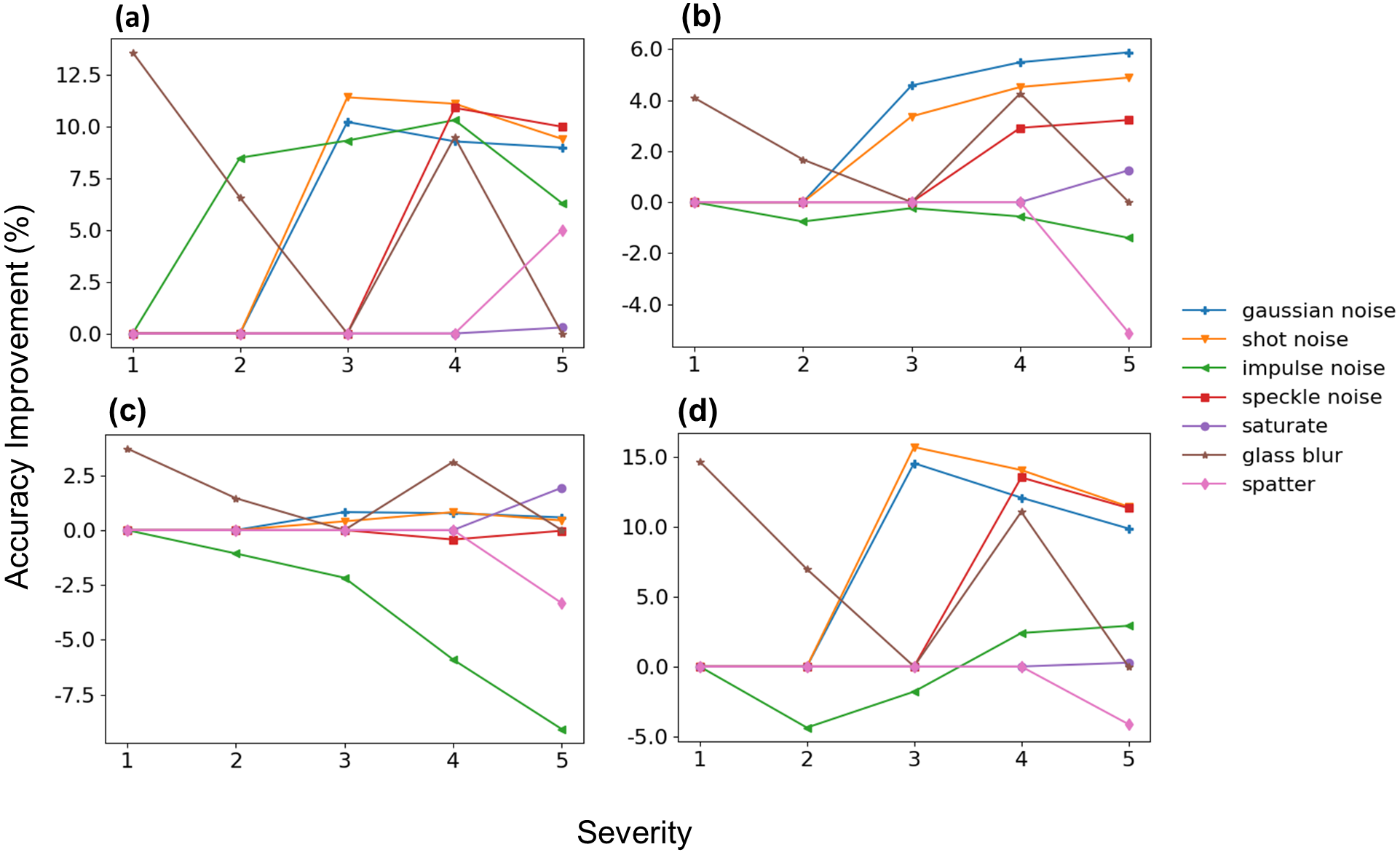}
    \caption{\small Accuracy improvements on CIFAR-100-C for increasing severity evaluated on CIFAR-100 classifiers trained with a) no data augmentation, b) AugMix, c) NoisyMix, d) PuzzleMix.}
\end{figure}
\vfill

\newpage
\section{Effect of Surrogate Shifts on \ours{} Performance}
\label{sec:clustering}
Here we examine the how our choice of surrogate image corruptions impacted \ours{} performance. Figure~\ref{fig:kmeans} shows the projected state representations for 100 samples of our best performing shifts and each surrogate shift, sorted into $K=2$ clusters by $K$-means clustering. The value of $K$ was selected from the range $[1,9]$ via the elbow method. We see that samples from ImageNet-C noise-type shifts are in the same cluster as those from our uniform noise shift. Likewise, brightness, contrast, and saturate are clustered with our $\gamma$ correction shift. Thus, \ours{} generalizes well from the surrogates to these ImageNet-C shifts.

\begin{figure}[h]
\centering\includegraphics[width=0.6\linewidth]{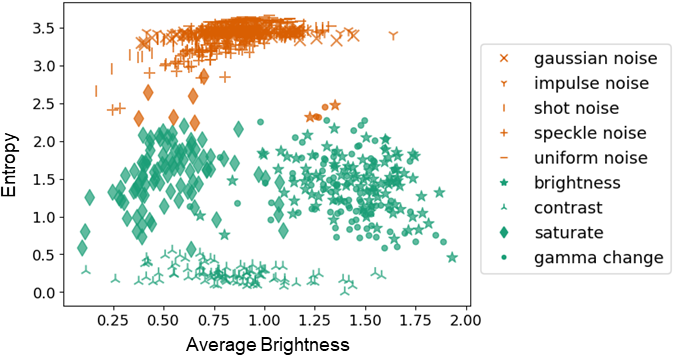}
    \caption{\small $K$-means clustering of sampled state representations for best-performing shifts from ImageNet-C and our surrogates (projected onto two dimensions). The surrogate shifts are clustered with the appropriate ImageNet-C shifts. \vspace{-2mm}}
    \label{fig:kmeans}
\end{figure}
\vfill

\newpage
\section{Limitations of \ours{}}
\label{sec:limitations}
In this supplementary section, we explore the limitations of \ours{} in greater detail.

\textbf{Action Library.} One important limitation of \ours{} is the technique's reliance on an appropriate selection of the action library. Since the policy network selects corrective actions from this library, by design our method can only recover from distribution shifts targeted by the action library. For example, for the ImageNet-C benchmark, our action library caters to noise-related and $\gamma$-level-related distribution shifts. The result is that \ours{} overlwhelmingly performs better on these types of ImageNet-C shifts, in contrast to blur-related, weather-related, and corruption-related shifts. Furthermore, it is not feasible to simply select a large, comprehensive action library, as this prohibitively increases the search space that must be traversed by the reinforcement learning algorithm.

\textbf{Surrogate Corruptions.} Much like the action library, the selection of surrogate corruptions can pose a challenge in implementing \ours{}. Our method hinges on the reinforcement learning agent interpolating knowledge about the surrogate corruptions to apply to the test-time corruptions (e.g., ImageNet-C or CIFAR-100-C). If there is large deviation between the set of surrogate corruptions and the set of test-time corruptions, the successful application of knowledge to the test data is less likely. However, this is a natural consequence of any data-driven solution, and the flexibility lent by using an offline approach mitigates this challenge.

\textbf{Responsiveness.} To estimate the Wasserstein distance in our reinforcement learning reward function, our method requires a set of data samples already subject to distribution shift. In high dimensions, the sample complexity of this estimate grows~\citep{ramdas2017wasserstein}. Even with dimensionality reduction techniques, a large sample size may be required, as the reduction in dimensions cannot be so large as to lose essential information in the data. In these cases, the responsiveness of \ours{} may be limited in speed, as \ours{} must wait for a large number of samples to begin adapting and recovering from distribution shift. We note that a quick response time is not an essential requirement in our setting, as we assume that when distribution shift arises, it persists for a certain duration of time. This is not a stringent assumption for the naturally occurring distribution shifts we consider in this work.

\textbf{Theoretical Guarantees.} Since our method views the end model we wish to adapt as a black box, we can make no theoretical guarantees on how \ours{} affects the class-wise performance of the model. However, we provide theoretical guarantees (see Theorem~\ref{thm}) that selecting transformations that minimize the distance between the training distribution and the corrupted distribution in turn minimizes the overall loss in performance on the corrupted data after correction.
\vfill
\fi

\end{document}